\newtheorem{lemma}{Lemma}
\theoremstyle{definition}
\newcommand{\vx}{\bm{x}}
\newcommand{\vv}{\bm{v}}
\newcommand{\vdelta}{\bm{\delta}}
\newcommand{\veta}{\bm{\eta}}
\DeclareMathOperator*{\argmax}{arg\,max}
\title{Certified \texorpdfstring{$\ell_2$}{L2} Attribution Robustness via Uniformly Smoothed Attributions}
\author{%
    Fan Wang \qquad Adams Wai-Kin Kong\\
    School of Computer Science and Engineering\\
    Nanyang Technological University\\
  \texttt{fan005@e.ntu.edu.sg}\quad\texttt{adamskong@ntu.edu.sg}
}
\begin{document}

\maketitle

\begin{abstract}
    Model attribution is a popular tool to explain the rationales behind model predictions. However, recent work suggests that the attributions are vulnerable to minute perturbations, which can be added to input samples to fool the attributions while maintaining the prediction outputs. Although empirical studies have shown positive performance via adversarial training, an effective certified defense method is eminently needed to understand the robustness of attributions. In this work, we propose to use uniform smoothing technique that augments the vanilla attributions by noises uniformly sampled from a certain space. It is proved that, for all perturbations within the attack region, the cosine similarity between uniformly smoothed attribution of perturbed sample and the unperturbed sample is guaranteed to be lower bounded. We also derive alternative formulations of the certification that is equivalent to the original one and provides the maximum size of perturbation or the minimum smoothing radius such that the attribution can not be perturbed. We evaluate the proposed method on three datasets and show that the proposed method can effectively protect the attributions from attacks, regardless of the architecture of networks, training schemes and the size of the datasets.
\end{abstract}

\section{Introduction}
The developments and wider uses of deep learning models in various security-sensitive applications, such as autonomous driving, medical diagnosis, and legal judgments, have raised discussions of the trustworthiness of these models. The lack of explainability of deep learning models, especially the recent popular large language models \citep{brown2020language}, has been one of the main concerns. Regulators have started to require the explainability of the AI models in some applications \citep{goodman2017european}, and the explainability of AI models has been one of the main focuses of the research community \citep{doshi2017towards}. Model attributions, as one of the important tools to explain the rationales behind the model predictions, have been used to understand the decision-making process of the models. For example, medical practitioners use the explanations generated by attribution methods to assist them in making important medical decisions \citep{antoniadi2021current,hrinivich2023interpretable,du2022explainable}. Similarly, in autonomous vehicles, the explainability helps to deal with potential liability and responsibility gaps \citep{atakishiyev2021explainable,burton2020mind} and people naturally requires the confirmation of the safety critical decisions. However, since these users often lack expertise in machine learning and technical details of attributions, there is a risk that the attributions have been manipulated without noticing. Attackers may specifically target attributions to mislead investigators, propagate false narratives, or evade detection, potentially leading to serious consequences. Consequently, practitioners could lose trust in these methods, resulting in their refusal to use these methods. Thus, a trustworthy application requires not only the predictions made by the AI models, but also its explainability produced from attributions. However, the attributions have also been shown recently to be vulnerable to small perturbations \citep{ghorbani2019interpretation}. Similar to adversarial attacks, attribution attacks generate perturbations that can be added to input samples. These perturbations distort the attributions while maintaining unchanged prediction outputs. This misleadingly gives a false sense of security to practitioners who blindly trust the attributions. An effective defense method is emergently needed to protect the attributions from the attacks.

Unlike adversarial defense, which has been extensively investigated to mitigate the harm of adversarial attacks that using both empirical \citep{madry2018towards,athalye2018obfuscated,carlini2017towards} and certified \citep{cohen2019certified,wong2018provable,yang2020randomized,lecuyer2019certified} defense methods, attribution defense is neglected. Almost all attribution protection works focus on adversarially training the model by augmenting the training data with manipulated samples to improve the robustness of attributions \citep{boopathy2020proper,chen2019robust,ivankay2020far,sarkar2021enhanced,wang2022exploiting}. Only the recent work by \citet{wang2023practical} attempted to derive a practical upper bound for the worst-case attribution deviation after the samples are attacked, while suffering from strict assumptions and heavy computations. In overall, none of the previous work is capable of providing a certification that ensures the attribution attack will not change the attribution more than a given threshold. In this work, we put our focus on the smoothed version of attributions and seek to provide a theoretical guarantee that the attributions are robust to any type of perturbations within $\ell_2$ attack budget. 

Based on the previously defined formulation of attribution robustness \citep{wang2023practical}, given a network $f$, its attribution function $g$ and perturbation $\vdelta\in\mathbb{R}^d$, the attribution robustness is defined as the optimal value that maximizes the worst-case attribution difference $D(\cdot, \cdot)$ under provided attack budget,
\begin{equation}\label{eqn:formulation}
    \max_{\vdelta} \quad D(g(\vx), g(\vx+\vdelta)) \quad \text{s.t.}\quad\Vert\vdelta\Vert\leq\epsilon.
\end{equation}

However, the aforementioned study only provides an approximate method to solve the optimization problem under the strict assumptions that the networks is locally linear, and, as a result, is unable to generalize to modern neural networks. To give a complete certification on the problem, we follow the formulation and attempt to find the effective upper bound of the attribution difference, equivalently, the lower bound of attribution similarity, which can be applied to any network. We propose to use uniformly smoothed attribution, which is a smoothed version of the original attribution, and show that, for all perturbations within the allowable attack budget, the cosine similarity that measures the difference between perturbed and unperturbed uniformly smoothed attribution can be certified to be lower bounded. The contribution of this paper can be summarized as follows:
\begin{itemize}
    \item We provide a theoretical guarantee that demonstrates the robustness of the uniformly smoothed attribution to any perturbations within allowable region. The robustness is measured by the similarity between the perturbed and unperturbed smoothed attribution. The method can be generally applied to any neural networks, and can be efficiently scaled to larger size images. To the best knowledge of the authors, this is the first work that provides a theoretical guarantee for attribution robustness.
    \item We present alternative formulations of the certification that are equivalent to the original one and also practical to be implemented. The alternative formulations determine the maximum size of perturbation, or the minimum radius of smoothing, ensuring that the attribution remains within a given tolerance.
    \item We demonstrate that the uniform smoothing can protect the attribution against $\ell_2$ attacks. More importantly, we evaluated the proposed method on the well-bounded integrated gradients and show that it can be effectively implemented and can successfully protect and certify the attributions from $\ell_2$ attacks.
\end{itemize}

The rest of this paper is organized as follows. In Section~\ref{sec:related_works}, we review the related works. In Section~\ref{sec:smoothing_method} and Section~\ref{sec:certification}, we introduce the uniformly smoothed attribution and show that it can be certified against attribution attacks. In Section~\ref{sec:experiments}, we present experimental results and evaluate the proposed method. Finally, we conclude this paper and in Section~\ref{sec:conclusion}.

\section{Related works}\label{sec:related_works}
\subsection{Attribution methods}
Attribution methods study the importance of each input feature, and measure that how much every feature contributes to the model prediction. One of popular attribution approach is the gradient-based method. Based on the property that gradient is the measurement of the rate of change, the gradient-based methods measure the feature importance by weighting the gradients in different ways. Examples of gradient-based methods include saliency \citep{simonyan2013deep}, integrated gradients (IG) \citep{sundararajan2017axiomatic}, full-gradient \citep{srinivas2019full}, and \emph{etc.} Other attribution methods include occlusion \citep{simonyan2013deep}, which measures the importance of each input feature by occluding the feature and measuring the change of the model prediction, layer-wise relevance propagation (LRP) \citep{bach2015pixel}, which propagates the output relevance to the input layer, and SHAP related methods \citep{lundberg2017unified,sundararajan2020many,kwon2022weightedshap}. An important property of the attribution methods is the axiom of completeness that $\sum_i g_i(\vx) = f_j(\vx)$, which indicates the relationship between attribution and the prediction score. Note that the gradient-based attribution methods are upper-bounded since the gradients of the model output with respect to the input are upper-bounded.

\subsection{Attribution attacks and defenses}
\citet{ghorbani2019interpretation} first pointed out that attributions can be fragile to iterative attribution attack, and \citet{dombrowski2019explanations} extended the attack to be targeted that attributions can be changed purposely into any preset patterns. Similar to adversarial attacks, attribution attacks maximize the loss function that measures the difference between the original attributions and the target attributions. In addition, the attribution attacks are controlled not to alter the classification results. To defend against attribution attacks, adversarial training \citep{madry2018towards} approaches have been adopted. \citet{chen2019robust} and \citet{boopathy2020proper} minimizes the $\ell_p$-norm differences between perturbed and original attributions, and \citet{ivankay2020far} considers Pearson's correlation coefficient. \citet{wang2022exploiting} suggests to use cosine similarity, emphasizing the attributions should be measured in directions rather than magnitudes. All of the above defenses improve the attribution robustness empirically, but none of them is able to provide certifications to ensure that the attributions will not change under any perturbations within the allowable attack region.

\begin{figure*}[t]
    \begin{minipage}{0.51\textwidth}
        \centering
        \centering
          \begin{subfigure}[b]{.48\textwidth}
              \centering
              \includegraphics[width=\textwidth]{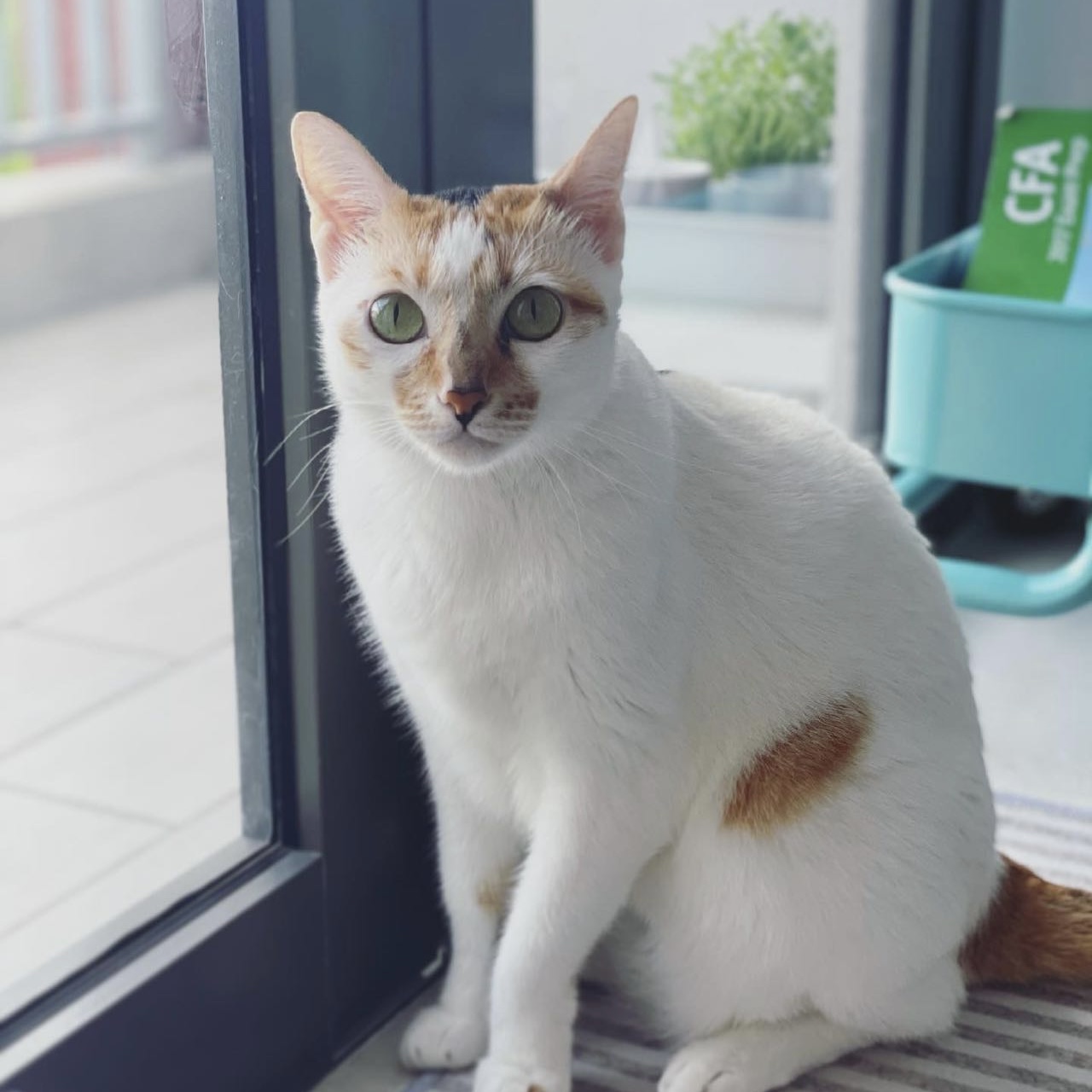}
              \caption{Original image}
          \end{subfigure}
          \begin{subfigure}[b]{.48\textwidth}
              \centering
              \includegraphics[width=\textwidth]{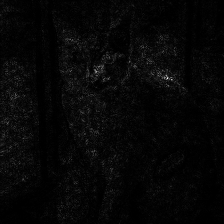}
              \caption{IG}
          \end{subfigure}
          \begin{subfigure}[b]{.48\textwidth}
              \centering
              \includegraphics[width=\textwidth]{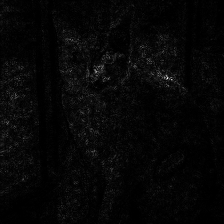}
              \caption{Gaussian smoothed IG}
          \end{subfigure}
          \begin{subfigure}[b]{.48\textwidth}
              \centering
              \includegraphics[width=\textwidth]{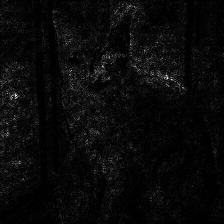}
              \caption{Uniformly smoothed IG}\label{fig:egyptian_uniform}
          \end{subfigure}
      \end{minipage}
      \hfill
      \begin{minipage}{0.45\textwidth}
          \centering
          \includegraphics[width=.75\textwidth]{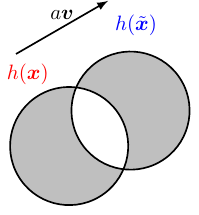}
      \end{minipage}
        \caption{(\textbf{left}) Examples of attributions (zoom in for better visibility). We choose to show the integrated gradients (IG) and its corresponding smoothing results. For Gaussian smoothing, the noise level is set to $\sigma=0.2$ and for uniformly smoothed IG, $\ell_2$ ball with radius $\sqrt{3}\sigma$ is used. (\textbf{right}) A 2D illustration of the volumes of $\mathcal{B}(\vx;r)$ and $\mathcal{B}(\tilde{\vx};r)$, as well as the relationship between $h(\vx)$ and $h(\tilde{\vx})$. Here $h(\vx)$ is the original attribution, and $a\vv$ represents the magnitude and direction of the translation of $h(\vx)$ after the sample is perturbed. $V_U$ in Theorem~\ref{thm:general} is the volume of shaded region in the figure, and $V_\mathcal{S}$ is the volume of each individual ball. When $h(\vx)$ is fixed, the lower bound of the cosine similarity between $h(\vx)$ and $h(\vx)+a\vv$ can be derived as a function of volumes}\label{fig:illustration}
\end{figure*}

\subsection{Randomized smoothing}
The smoothing technique has been popular in improving certified adversarial robustness \citep{liu2018towards,lecuyer2019certified,cohen2019certified,yang2020randomized}. The smoothed classifiers take a batch of inputs that are randomly sampled from the neighbourhood of original inputs under certain distributions $q$ and make the decisions based on the most likely outputs, \emph{i.e.}, $\argmax_y \mathbb{P}_{\veta\sim\mu}[F(\vx + \veta)=y]$. They provide the certification of the maximum radius so that no perturbations within the radius can alter the classification label. \citet{cohen2019certified} certifies the $\ell_2$ attack based on the Neyman-Pearson lemma and the result is alternatively proved by \citet{salman2019provably} using explicit Lipschitz constants. \citet{lecuyer2019certified} and \citet{teng2020ell} consider Laplacian smoothing for the $\ell_1$ attack. \citet{yang2020randomized} derived a similar result in $\ell_\infty$ though the radius becomes small when the dimension of data gets large. \citet{kumar2021center} applied randomized smoothing to structured output, which the attributions belong to, but the specific bound for attribution is to loose to be meaningful. Thus, there are no existing works that provide defense and valid certifications for attribution using randomized smoothing due to the difficulty of defining the attribution robustness and the computation of the attribution gradient. In this work, an effective method to formulate the smoothed attribution robustness as a simple optimization problem is proposed to provide certifications against attribution attacks.

\section{Uniformly smoothed attribution}\label{sec:smoothing_method}
Consider a classifier $f:\mathbb{R}^d\rightarrow \left[0, 1\right]^c$ that maps the input $\vx\in\mathbb{R}^d$ to the softmax output $y\in\left[0, 1\right]$, and its attribution function $g(\vx):\mathbb{R}^d\rightarrow \mathbb{R}^d$. The \emph{smoothed attribution} of $f$ is to construct a new attribution $h$ by taking the mean of attributions on $\vx+\veta$, where $\veta$ is randomly drawn from a density $\mu$, \emph{i.e.}, the smoothed attribution $h$ can be defined as follows:
\begin{equation}
    h(\vx) = \mathbb{E}_{\veta\sim\mu}[g(\vx+\veta)] \label{eqn:smoothed_attribution}.
\end{equation}

To construct smoothed attribution, the density $q$ can be chosen arbitrarily, and the smoothed attributions provide visually sharpened gradient-based attributions (see Figure~\ref{fig:illustration} (left)). \citet{smilkov2017smoothgrad} choose $q$ to be multivariate Gaussian distribution on input gradients to weaken the visually noisy attribution. In this work, we aim at certifying the attribution robustness under $\ell_2$ attack; thus we choose $\mu$ to be a uniform distribution on a $d$-dimensional closed space $\mathcal{S}$ centered at $\bm{0}$, especially the $\ell_2$-norm ball of radius $r$, $\mathcal{B}(\bm{0};r) = \left\{\bm{y}: \Vert \bm{y}\Vert_2\leq r, \bm{y}\in\mathbb{R}^d\right\}$. It can be seen that smoothing under this setting also provides high attribution quality (Figure~\ref{fig:egyptian_uniform}). To quantitatively evaluate the effectiveness of uniformly smoothed attribution, we can further evaluate its performance using GridPG introduced by \citet{rao2022towards}, which quantifies the significance of individual features in terms of positive contributions or influences. The GridPG values of IG, uniformly smoothed IG and Gaussian smoothed IG for $5,000$ randomly selected ImageNet examples are $0.4021$, $0.4093$ and $0.4110$, respectively, which suggest that the uniformly smoothed attributions can achieve comparable performance of GridPG with the Gaussian smoothed attributions, as well as the original non-smoothed attributions.

\section{Certifying the cosine similarity of smoothed attributions}\label{sec:certification}
We now consider $\mathcal{S}$ as an $\ell_2$-norm ball for the ease of analysing the certification. Adapted from the formulation in Eq.~(\ref{eqn:formulation}), the robustness of attribution is defined as the minimum possible attribution similarity when a natural image is perturbed by attribution attacks. Cosine similarity is chosen to measure the similarity as it has been shown to be the most suitable alternative to the non-differentiable Kendall's rank correlation, which is the most common evaluation index \citep{wang2022exploiting}. Suppose that the $\ell_2$ attribution attack is performed upon input sample $\vx$, the maximum allowable perturbation is $\epsilon$, \emph{i.e.}, $\tilde{\vx} = \vx + \bm{\delta}$, where $\Vert\vdelta\Vert_2\leq \epsilon$. For all $\Vert\vdelta\Vert_2\leq\epsilon$, we want to find out the minimum value of $\cos(h(\vx), h(\tilde{\vx}))$, \emph{i.e.}, the lower bound of cosine similarity for attributions. Thus, the optimization problem we are interested in is formulated as follows:
\begin{equation}\label{eqn:cosine_formulation}
    \min_{\vdelta} \quad \cos(h(\vx), h(\vx+\vdelta)) \quad \text{s.t.}\quad\Vert\vdelta\Vert_2\leq\epsilon.
\end{equation}

That is, we will show that, given an arbitrary sample point $\vx$, for all perturbed samples $\tilde{\vx}$, the cosine similarity between the original smoothed attribution and the corresponding perturbed smoothed attributions is guaranteed to be lower bounded by the optimum of (\ref{eqn:cosine_formulation}). Alternatively, in a more practical perspective, given a threshold $T$ for the cosine similarity, we want to know the maximum size of perturbation, or the minimum smoothing radius, such that no perturbations inside would cause the attribution difference to exceed the threshold.

However, although optimizing the cosine similarity for attribution can be an intuitive way to find the lower bound, it is difficult in this problem to directly study the cosine function. Moreover, it is also intractable to optimize the cosine similarity with respect to a vector $\vdelta$. To address this issue, we first reformulate the problem into an optimization over two scalars and then solve the alternative problem to obtain the lower bound of cosine similarity. All the proofs and derivations of theorems, lemmas and corollaries are provided in the Appendix \ref{apx:proof}.

\subsection{One-dimensional reformulation}
We note that cosine similarity of attributions is in fact an inner product of their normalized vectors. Besides, we also observe that $h(\vx)$ is the mean of $g(\vx+\veta)$ with respect to $\veta$, which is, by definition, equivalent to the integral of $g$ weighted by the density of uniform distribution over $\mathcal{B}(\vx;r)$. More importantly, for perturbed example $\tilde{\vx}$, the weighting region is $\mathcal{B}(\tilde{\vx};r)$, which is a translated version of $\mathcal{B}(\vx;r)$ and they are expected to intersect with each other when the distance of their centers is smaller than twice of the radius $r$. Therefore, given the input sample $\vx$ and its attribution $h(\vx)$, we can rewrite the minimization of cosine similarity as follows:
\begin{equation}
    \begin{aligned}
        \min_{a, \vv} &\quad\frac{h(\vx)^T}{\Vert h(\vx)\Vert}\left(\frac{h(\vx) + a\vv}{\Vert h(\vx) + a\vv\Vert}\right)
    \end{aligned}
\end{equation}

where $a$ represents the magnitude of the translation and the unit vector $\vv$ is the direction of translation as shown in Figure~\ref{fig:illustration} (right). It can be shown that the magnitude $a$ is constrained by a constant related to the intersecting volume and the property of the attribution function itself.

In a high-dimensional case, for a fixed cosine similarity value, a given $h(\vx)$ and $h(\tilde{\vx})$, in fact, form a spherical cone. Thus, we can decompose the directional unit vector $\vv$ into $\vv=\cos\theta\vv_{\parallel}+\sin\theta\vv_{\perp}$, where $\vv_{\perp}$ is perpendicular to $h(\vx)$ and $\vv_{\parallel}$ is parallel to $h(\vx)$. Then, we have
\begin{equation}
    \min_{a, \theta}\quad\frac{\Vert h(\vx)\Vert + a\cos\theta}{\sqrt{(\Vert h(\vx)\Vert + a\cos\theta)^2 + (a\sin\theta)^2}},
\end{equation}
where $0\leq\theta\leq 2\pi$. 

Since $a$ is a scalar representing the magnitude of the translation from $h(\vx)$ to $h(\tilde{\vx})$, it can be shown that the magnitude of $a$ is upper bounded by the magnitude of the gradient weighted by the ratio of volume change during the translation process. Specifically, $a\leq MV_U/V_{\mathcal{S}}$, where $V_{\mathcal{S}}$ is the volume of the $\ell_2$-ball $\mathcal{B}(\bm{0};r)$, $V_{U}$ is the volume of the union of the two sampling space centered at $\vx$ and $\tilde{\vx}$ minus their intersection, and $M$ is a constant that depends on the upper bound of $g$, We notice that the gradient-based attribution is a function of input gradient, $\nabla f(\vx)$, which is bounded for Lipschitz continuous networks (See Lemma \ref{lemma:lips} in the Appendix \ref{apx:proof}); thus, the upper bound can be derived separately for different attribution functions.

\subsection{Lower bound of cosine similarity for bounded attributions}
Now that we have reformulated the optimization with respect to vector $\vv$ into an alternative simpler one with respect to scalar values $a$ and $\theta$. By solving the alternative problem, our result shows that the smoothed attribution is robust within the following half-angle of a spherical cone.
\begin{restatable}{theorem}{generalthm}\label{thm:general}
    Let $g:\mathbb{R}^d\rightarrow\mathbb{R}^d$ be a upper bounded attribution function, and $\veta\stackrel{U}{\sim}\mathcal{B}(\bm{0}; r)$. Let $h$ be the smoothed version of $g$ as defined in (\ref{eqn:smoothed_attribution}). Then, for all $\tilde{\vx}\in\left\{\vx + \bm{\delta}\vert\Vert\bm{\delta}\Vert_2\leq\epsilon\right\}$, we have $\cos\left(h(\vx), h(\tilde{\vx})\right) \geq T$, where
    \begin{equation}
         T = \frac{\Vert h(\vx)\Vert_2}{\sqrt{\Vert h(\vx)\Vert_2^2 + M^2V_{U}^2/V^2_{\mathcal{S}}}}\label{eqn:T}
    \end{equation}
    Here, $M$ is the upper bound of $g$. $V_{\mathcal{S}}$ is the volume of the $\ell_2$-ball $\mathcal{B}(\bm{0};r)$, and $V_{U}$ is the volume of the union of the two sampling space centered at $\vx$ and $\tilde{\vx}$ minus their intersection.
\end{restatable}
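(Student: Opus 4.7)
The plan is to reduce the vector minimization in (\ref{eqn:cosine_formulation}) to a scalar optimization, to bound the magnitude of the displacement $h(\tilde{\vx})-h(\vx)$ using the integral representation of the smoothed attribution, and then to solve the resulting two-variable problem to extract $T$.

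First I would fix $h(\vx)$ and write $h(\tilde{\vx})=h(\vx)+a\vv$, where $a=\|h(\tilde{\vx})-h(\vx)\|_2\ge 0$ and $\vv$ is a unit vector, exactly as in the paper's reformulation. Because the cosine similarity depends only on the components of $\vv$ along and orthogonal to $h(\vx)$, by rotational symmetry I can decompose $\vv=\cos\theta\,\vv_{\parallel}+\sin\theta\,\vv_{\perp}$ and work inside the 2D plane spanned by $h(\vx)$ and $\vv$. Expanding the inner product and collecting terms yields the one-dimensional reformulation already displayed in the excerpt,
$$\cos\!\bigl(h(\vx),\,h(\vx)+a\vv\bigr)=\frac{\|h(\vx)\|_2+a\cos\theta}{\sqrt{\|h(\vx)\|_2^2+2\|h(\vx)\|_2\,a\cos\theta+a^2}}.$$

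Next I would upper bound $a$ via the integral form of the smoothed attribution. Since $\veta$ is uniform on $\mathcal{B}(\bm{0};r)$, we have $h(\vx)=V_\mathcal{S}^{-1}\int_{\mathcal{B}(\vx;r)}g(\vy)\,d\vy$, and similarly for $\tilde{\vx}$. Subtracting the two integrals, the contribution from the intersection $\mathcal{B}(\vx;r)\cap\mathcal{B}(\tilde{\vx};r)$ cancels, leaving integrals of $g$ over the two ``crescent'' regions with opposite signs; the union of these crescents is the symmetric difference of the two balls, of volume $V_U$. Applying the triangle inequality for vector-valued integrals together with the hypothesis $\|g(\vy)\|_2\le M$ yields $a\le MV_U/V_\mathcal{S}$, which is exactly the quantity appearing in (\ref{eqn:T}).

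Finally, I would minimize the closed-form expression above over $a\in[0,MV_U/V_\mathcal{S}]$ and $\theta\in[0,2\pi]$. A direct differentiation locates the critical point(s) of the objective inside the feasible box, and substituting back produces the claimed $T$ in (\ref{eqn:T}). The main obstacle, I expect, is this final optimization: one must check whether the interior critical angle is attainable for the relevant range of $a$, identify the correct monotonicity in $a$ so that the bound is saturated at $a=MV_U/V_\mathcal{S}$, and verify that the resulting minimizer collapses to the clean formula stated in the theorem. The integral-comparison step is otherwise routine once the geometry of the two overlapping $\ell_2$-balls is made explicit, and the 2D reduction is a standard consequence of rotation invariance.
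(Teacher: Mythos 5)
Your plan is essentially the paper's own argument: the paper likewise writes $h(\tilde{\vx}) = h(\vx) + a\vv$ with $a\vv$ equal to the difference of the integrals of $g$ over the two crescents of the symmetric difference, bounds $a \leq MV_U/V_{\mathcal{S}}$ by the triangle inequality and $\Vert g\Vert_2\leq M$, decomposes $\vv$ into components parallel and perpendicular to $h(\vx)$, and then minimizes the same two-scalar expression. So the reduction and the bound on the displacement are fine and identical in spirit to Appendix A.

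The genuine gap is in the step you defer with ``substituting back produces the claimed $T$'': a direct minimization does not land on Eqn.~(\ref{eqn:T}). Write $c=\Vert h(\vx)\Vert_2$ and $A=MV_U/V_{\mathcal{S}}$, so the objective is $\phi(a,\theta)=(c+a\cos\theta)/\sqrt{c^2+2ca\cos\theta+a^2}$ over $0\leq a\leq A$. Then $\partial\phi/\partial\theta \propto -a^2\sin\theta\,(c\cos\theta+a)$, so the stationary angles are $\sin\theta=0$ and $\cos\theta=-a/c$, not $\cos\theta=0$. Geometrically your relaxation allows $h(\tilde{\vx})$ to be any point of the ball of radius $A$ centered at $h(\vx)$, and the cosine is minimized at the tangent configuration: for $A\leq c$ the minimum is $\sqrt{c^2-A^2}/c$, attained at $a=A$, $\cos\theta=-A/c$, which is strictly smaller than the stated $T=c/\sqrt{c^2+A^2}$ (the value at the perpendicular displacement $\cos\theta=0$); e.g.\ $c=1$, $A=0.5$ gives $0.866$ versus $T\approx 0.894$, and for $A>c$ the relaxed minimum is $-1$. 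So carrying out your final optimization honestly yields a weaker certificate than the theorem claims, and the constraint $a\leq A$ alone does not force the minimizer to the $\cos\theta=0$ critical point. The paper resolves this step by a Lagrangian computation that reports the critical points as ``$\cos\theta=0$ or $a=0$''; you should check that conclusion against the stationarity condition above before asserting the formula, and either justify discarding the tangent configuration (which would require information about the displacement beyond its norm bound, e.g.\ exploiting that $R_1$ is itself a part of $h(\vx)$ with $\Vert R_1\Vert,\Vert R_3\Vert\leq MV_U/(2V_{\mathcal{S}})$) or accept the smaller, tangent-based lower bound that your argument actually proves.
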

The entire proof can be found in Appendix \ref{apx:proof}. The theorem points out that the lower bound of cosine similarity is related to the smoothing space around the input samples and the maximum allowable perturbation size. Moreover, when the smoothing space is a $\ell_2$-norm ball, the above result can be derived by directly computing two volumes $V_U$ and $V_\mathcal{S}$, which can be explicitly calculated by 
\begin{equation}\label{eqn:volume_u}
    V_U = 2V_{\mathcal{S}}\times \left(1-I_{(2rh-h^2)/r^2}\left(\frac{d+1}{2}, \frac{1}{2}\right)\right)
\end{equation}
where $h = r-\epsilon/2\geq 0$ and $I_x(a, b)$ is the regularized incomplete beta function, cumulative density function of beta distribution \citep{li2010concise}.

We observe the following properties of the above theorem.
\begin{enumerate}
    \item Unlike previous attribution robustness works, such as \citet{dombrowski2019explanations}, \citet{singh2020attributional}, \citet{boopathy2020proper} and \citet{wang2022exploiting}, which require the networks to be twice-differentiable and need to change the ReLU activation into Softplus, the proposed result does not assume anything on the classifiers. Thus, it can be safely applied to any neural network and any architectures.
    \item The lower bound depends on the radius of smoothing, $r$. The bound becomes larger as the radius of smoothing grows. In extreme cases when $r$ tends to infinity, the smoothing spaces of two samples completely overlap, which corresponds to the same smoothed attribution and their cosine similarity becomes $1$. 
    \item At the same time, the lower bound also decreases when the attack budget $\epsilon$ increases. Besides, when $\epsilon$ increases beyond the constraint that $h=r-\epsilon/2\geq 0$ and tends to infinity, the distance between two attributions will become further and their cosine similarity will tend to $0$ in high dimensional space, which makes the lower bound becomes trivial.
    \item The proposed method can be scaled to datasets with large images. The lower bound is efficient to compute since only the smoothed attribution of given sample needed. On the contrary, the previous works that approximately estimate the attribution robustness \citep{wang2023practical} require the computation of input Hessian and the corresponding eigenvalues and eigenvectors, which becomes intractable for larger size images on modern neural networks.
\end{enumerate}

\begin{table*}[t]
    \centering
    \caption{Comparison of top-k and Kendall's rank correlation between $\ell_2$ perturbed and non-perturbed attributions on standard and robust models using smoothed and non-smoothed attributions.}\label{tbl:smoothed_robustness}
    \begin{tabular}{llcccc}
        \toprule
        & & Standard & IG-NORM \citep{chen2019robust} & TRADES \citep{zhang2019theoretically} & IGR \citep{wang2022exploiting}\\
        \midrule
        SM & top-k &0.3449 &0.6575 &0.6450 &0.8354\\
        & Kendall &0.1496 &0.4709 &0.4642 &0.7553\\
        \midrule
        SmoothSM & top-k &0.3853 &0.6261 &0.6082 &0.8363\\
        & Kendall &0.1670 &0.4238 &0.4119 &0.7568\\
        \midrule
        \midrule
        IG & top-k &0.4742 &0.7075 &0.6821 &0.8402\\
        & Kendall &0.1744 &0.5098 &0.5030 &0.7839\\
        \midrule
        SmoothIG & top-k &0.5302 &0.6730 &0.6528 &0.8460\\
        & Kendall &0.3819 &0.4494 &0.4533 &0.7612\\
        \bottomrule
    \end{tabular}
\end{table*}

\subsection{Alternative formulations of the attribution robustness}\label{sec:alternative}
The previous section formulates the robustness of smoothed attribution in terms of the smallest cosine similarity between the original and perturbed smoothed attribution, when the attack budget and the smoothing radius are fixed. In some scenarios, practitioners want to formulate the robustness in different ways. For example, we may want to find the maximum allowable perturbation $\epsilon$ such that the cosine similarity between the original and perturbed smoothed attribution is guaranteed to be greater than a predefined threshold $T$. On the other hand, one can also obtain the minimum smoothing radius needed such that the desired attribution robustness is achieved within allowable attack region. The following corollary provides the alternative formulations of the attribution robustness.

\begin{restatable}{corollary}{corollarymaxeps}\label{cor:maxeps}
    Let $g:\mathbb{R}^d\rightarrow\mathbb{R}^d$ be a bounded attribution function, and $\veta\stackrel{U}{\sim}\mathcal{B}(\vx; r)$. Let $h$ be the smoothed version of $g$ as defined in (\ref{eqn:smoothed_attribution}). 
    \begin{enumerate}[(i)]
        \item Given a predefined threshold $T\in[0, 1]$, then for all $\Vert\vdelta\Vert_2\leq\epsilon$, we have $\cos(h(\vx), h(\vx + \vdelta))\geq T$, where
        \begin{equation}
            \epsilon = 2r\sqrt{1-I^{-1}_Z\left(\frac{d+1}{2}, \frac{1}{2}\right)}.\label{eqn:epsilon}
        \end{equation}
        \item Given a predefined threshold $T\in[0, 1]$ and the maximum perturbation size $\epsilon\geq 0$, the smoothed attribution satisfies $\cos(h(\vx), h(\tilde{\vx}))\geq T$ for all $\tilde{\vx}\in\left\{\vx + \bm{\delta}\vert\Vert\bm{\delta}\Vert_2\leq \epsilon\right\}$ when $r\geq R$, where
        \begin{equation}
            R = \frac{\epsilon}{2}\left(1-I^{-1}_Z\left(\frac{d+1}{2}, \frac{1}{2}\right)\right)^{-\frac{1}{2}}.
        \end{equation}
    \end{enumerate}
    $I^{-1}_z(a, b)$ is the inverse of the regularized incomplete beta function, and $Z$ is defined as
    \begin{equation}
        Z = 1-\frac{\Vert h(\vx)\Vert_2}{2M}\left(\frac{1}{T^2}-1\right)\label{eqn:Z}
    \end{equation}
\end{restatable}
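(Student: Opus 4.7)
The plan is to treat both parts of the corollary as direct algebraic inversions of the cosine bound (\ref{eqn:T}) delivered by Theorem~\ref{thm:general}: that theorem already certifies that the worst-case cosine similarity under attack budget $\epsilon$ and smoothing radius $r$ is governed by the closed form in (\ref{eqn:T}), so the task reduces to solving that equation either for $\epsilon$ (fixing $r$ and setting the bound equal to the target $T$) or for $r$ (fixing $\epsilon$). Strict monotonicity of the right-hand side of (\ref{eqn:T}) in $V_U$, combined with the monotonicity of $V_U$ in $\epsilon$ (increasing) and in $r$ (decreasing, relative to $V_{\mathcal{S}}$), then upgrades the equality at the boundary into the one-sided statements ``$\Vert\vdelta\Vert_2\leq\epsilon$'' in part~(i) and ``$r\geq R$'' in part~(ii).

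First I would isolate the ratio $V_U/V_{\mathcal{S}}$ by squaring and rearranging (\ref{eqn:T}), expressing $V_U/V_{\mathcal{S}}$ purely in terms of $\Vert h(\vx)\Vert_2/M$ and $T$. Next I would substitute this into the explicit volume formula (\ref{eqn:volume_u}), which writes $V_U/(2V_{\mathcal{S}}) = 1 - I_{x^{*}}\bigl(\tfrac{d+1}{2},\tfrac{1}{2}\bigr)$ with $x^{*} = (2rh-h^2)/r^2$. Rearranging identifies the resulting value of the regularized incomplete beta function with $Z$ as given in (\ref{eqn:Z}); since $I_x\bigl(\tfrac{d+1}{2},\tfrac{1}{2}\bigr)$ is strictly increasing in $x\in[0,1]$, the inverse is well-defined, and I can recover $x^{*} = I^{-1}_Z\bigl(\tfrac{d+1}{2},\tfrac{1}{2}\bigr)$.

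Finally I would substitute $h = r-\epsilon/2$ and use the elementary identity
\begin{equation*}
    \frac{2rh-h^2}{r^2} \;=\; 1 - \frac{\epsilon^2}{4r^2},
\end{equation*}
which converts the previous equation into
\begin{equation*}
    \frac{\epsilon^2}{4r^2} \;=\; 1 - I^{-1}_Z\bigl(\tfrac{d+1}{2},\tfrac{1}{2}\bigr).
\end{equation*}
Solving for $\epsilon$ gives (\ref{eqn:epsilon}) for part~(i), and solving instead for $r$ produces the expression $R$ stated in part~(ii).

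The main obstacle is not algebraic but one of well-posedness and quantifiers. I need to verify that (a) the argument $Z$ of $I^{-1}$ lies in $[0,1]$, which requires the threshold $T$ to sit in a non-degenerate regime (otherwise the smoothing radius is too small to enforce the requested similarity and the certification is vacuous); (b) the constraint $h = r-\epsilon/2 \geq 0$ implicit in (\ref{eqn:volume_u}) holds, i.e.\ the two sampling balls still overlap; and (c) the monotonicity argument genuinely upgrades the ``boundary equality'' produced by Theorem~\ref{thm:general} to a uniform guarantee over \emph{all} $\vdelta$ with $\Vert\vdelta\Vert_2\leq\epsilon$ or \emph{all} $r\geq R$. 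Once these monotonicity facts are spelled out — noting that shrinking the overlap increases $V_U$, which decreases the cosine bound — the two alternative formulations follow immediately from the inversion above.
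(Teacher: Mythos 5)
Your plan is essentially the paper's own proof: set the bound of Theorem~\ref{thm:general} equal to $T$, solve for $V_U/V_{\mathcal{S}}$, match it against the volume formula (\ref{eqn:volume_u}) to identify the beta-function argument, invert, and use $h=r-\epsilon/2$ with $(2rh-h^2)/r^2 = 1-\epsilon^2/(4r^2)$ to solve for $\epsilon$ or $r$; your added remarks on well-posedness ($Z\in[0,1]$, $r\geq\epsilon/2$) and monotonicity only make explicit what the paper leaves implicit. One small point: carrying out the inversion correctly gives $Z = 1-\tfrac{\Vert h(\vx)\Vert_2}{2M}\sqrt{\tfrac{1}{T^2}-1}$, which is what the paper's proof derives; the un-square-rooted expression in (\ref{eqn:Z}) appears to be a typo in the statement, so your identification ``with $Z$ as given in (\ref{eqn:Z})'' should be read with that correction.
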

The derivation of the Corollary can be found in Appendix \ref{apx:proof}. We notice that, in these formulations, when the smoothing radius is larger, the maximum allowable perturbation is also larger, which allows stronger attacks while keeping the attribution similarity within a controllable range. Similarly, when the maximum allowable perturbation is larger, the minimum smoothing radius is also larger, which means that the attribution similarity can be maintained with a larger smoothing radius.

\section{Experiments and results}\label{sec:experiments}
\begin{table*}[t]
    \centering
    \caption{The theoretical lower bound ($T$ in Eqn.~(\ref{eqn:T})) for cosine similarity evaluated on baseline models using MNIST. Note that the bound is not achievable for $r=0.5$ when $\epsilon=1.0$, since the radius must be greater than $\epsilon/2$.}\label{tbl:mnist_bound}

    \begin{tabular}{l|lccccccc}
        \toprule
        $\epsilon=0.5$ & $\ell_2$ radius ($r$)  & 0.5 & 1.0 & 1.5 & 2.0 & 2.5 & 3.0 & 3.5\\
        \midrule
        & Standard &0.3002 &0.3141 &0.3385 &0.3732 &0.4144 &0.4600 &0.5057\\
        & IG-NORM &0.4038 &0.4189 &0.4432 &0.4729 &0.5055 &0.5466 &0.5909\\
        & IGR &0.4145 &0.4269 &0.4482 &0.4792 &0.5208 &0.5748 &0.6392\\
        \midrule
        \midrule
        $\epsilon=1.0$ & $\ell_2$ radius ($r$) & 0.5 & 1.0 & 1.5 & 2.0 & 2.5 & 3.0 & 3.5\\
        \midrule
        & Standard & / &0.3034 &0.3092 &0.3264 &0.3716 &0.3990 &0.4178\\
        & IG-NORM  & / &0.3650 &0.3822 &0.3892 &0.4220 &0.4974 &0.5365\\
        & IGR  & / &0.3834 &0.4025 &0.4558 &0.4914 &0.5237 &0.5358\\
        \bottomrule
    \end{tabular}
\end{table*}
In this section, we evaluate the effectiveness of uniformly smoothed attribution. Following previous work on attribution robustness, we use the $\ell_2$ attribution attack adapted from the Iterative Feature Importance Attacks (IFIA) by \citet{ghorbani2019interpretation}. It is first shown that the uniformly smoothed attributions are less likely to be perturbed comparing with non-smoothed attributions when being attacked by IFIA. After which, we present the results of certification using the proposed lower bound of cosine similarity. The experiments are conducted on baseline models including adversarial robust models\citep{madry2018towards,zhang2019theoretically}, attributional robust models\citep{chen2019robust,singh2020attributional,ivankay2020far,wang2022exploiting}, as well as non-robust models trained for standard classification tasks. Following those baseline models, the method is tested on the validation sets of MNIST \citep{lecun2010mnist} using a small-size convolutional network, on CIFAR-10 \citep{krizhevsky2009learning} using a ResNet-18 \citep{he2016deep}, and on ImageNet \citep{russakovsky2015imagenet} using a ResNet-50 \citep{he2016deep}. More details of the experiments are described in the Appendix. All experiments are run on NVIDIA GeForce RTX 3090.\footnote{Source code will be released later.}

To empirically compute the uniformly smoothed attribution for every sample, $N$ points are randomly sampled from the $d$-dimensional sphere uniformly, and augmented to the input sample. To do so, the sampling technique introduced by \citet{box1958note} is applied. The integration in Eqn.~\ref{eqn:smoothed_attribution} is then empirically estimated using Monte Carlo integration, \emph{i.e.}, $\hat{h}(\vx) = \frac{1}{N}\sum g(\vx + \veta_i)$, where $\veta_i\stackrel{U}{\sim} \mathcal{B}(\bm{0};r)$. For large $N$, the estimator $\hat{h}(\vx)$ almost surely converges to $h(\vx)$ \citep{feller1991introduction}; hence the convergence of $\hat{T}$ to $T$ can be obtained (see Appendix~\ref{apx:prob_bound} for details). Unless specifically stated, we choose the number of samples $N$ to be $100,000$ to compute the proposed lower bound, and $N^{\ast}=300$ for the uniformly smoothed attribution being attacked in all experiments.

\subsection{Evaluation of the robustness of uniformly smoothed attribution}

\begin{table*}[t]
    \centering
    \caption{Theoretical lower bounds evaluated on non-robust model and ImageNet using different radii $r$ and attack sizes $\epsilon$. Note that the method is only applicable when radius $r$ must be greater than $\epsilon/2$.}\label{tbl:imagenet}

    \begin{tabular}{llccccccc}
        \toprule
        &$r$ & 0.5 & 1.0 & 1.5 & 2.0 & 2.5 & 3.0 & 3.5\\
        \midrule
        $\epsilon=0.5$ && 0.2612 & 0.2594 & 0.2704 & 0.2716 & 0.2892 & 0.2973 & 0.3029\\
        $\epsilon=1.0$ &&/ &0.1803 &0.1992 &0.1746 &0.1904 &0.2127 &0.2502\\
        $\epsilon=2.0$ &&/ &/ &0.1753 &0.1852 &0.2044 &0.2015 &0.2045\\
        \bottomrule
    \end{tabular}
\end{table*}

We first conduct the experiment to verify that the uniformly smoothed attribution itself is more robust than the original attribution. The uniform smoothing around the $\ell_2$ ball with radius $0.5$ is applied to the saliency map (SM) \citep{simonyan2013deep} and integrated gradients (IG) \citep{sundararajan2017axiomatic} and evaluate on CIFAR-10. The resultant attributions are denoted by SmoothSM and SmoothIG, respectively. We then attack the attributions using the $\ell_2$ IFIA attack and evaluate the robustness using Kendall's rank correlation and top-k intersection \citep{ghorbani2019interpretation}. The experiments are evaluated on both non-robust model (\emph{Standard}) and robust models (\emph{IG-NORM}, \emph{TRADES}, \emph{IGR}). Note that IFIA is directly performed on the smoothSM and smoothIG, instead of its original counterpart. Since the PGD-like attribution attack requires to take the derivative of the attribution to determine the direction of gradient descent, the double backpropagation is needed. Thus, it is necessary to replace the ReLU activation by the twice-differentiable Softplus during attack \citep{dombrowski2019explanations}. The results are shown in Table~\ref{tbl:smoothed_robustness}. 

We observe that for the non-robust model, both SmoothSM and SmoothIG perform better than its non-smoothed counterparts in both metrics, which shows that the uniformly smoothed attribution itself is more resistant to the attribution attacks. For models that are specifically trained to defend against the attribution attacks using heuristic methods adapted from adversarial training, \emph{e.g.}, IG-NORM, TRADES and IGR, the smoothed attributions show comparable robustness to the non-smoothed attribution. Moreover, we also notice that SmoothIG performs better than SmoothSM, especially for the non-robust models. This can be attributed to the fact that IG satisfies the axiom of completeness, which ensures that the sum of IG is upper-bounded by the model output. In addition, we also observed that the smoothing technique does not always enhance the robustness of attribution, as measured by top-k and Kendall's rank correlation. It is worth noting that \citet{yeh2019fidelity} argued that randomized smoothing can reduce attribution sensitivities and consequently improve robustness. The disparity in our findings stems from the fact that we specifically evaluated Kendall's rank correlation, whereas their study defined sensitivity based on $\ell_2$-norm distance, which was subsequently deemed inappropriate for evaluating attribution robustness by \citet{wang2022exploiting}.

\subsection{Evaluation of the certification of uniformly smoothed attributions}

In this section, the lower bound of the cosine similarity between the original and attacked attribution is reported. We use the integrated gradients as an example since it is well-bounded due to the axiom of completeness, and the technique can be also applied to any other gradient-based attributions. 

Table \ref{tbl:mnist_bound} reports the theoretical bound evaluated on MNIST computed using Theorem~\ref{thm:general}. We include the non-robust and two attributional robust models and compute the bound for different $\ell_2$ radius $r$ and attack size $\epsilon$ pairs. The lower bounds are validated by examining the actual $\ell_2$ attacks. Specifically, each input sample has been attacked 20 times and the cosine similarities of resulting perturbed attributions with original attributions are examined. In total, 200,000 attacked images are tested for each parameter pair and none of the evaluation metrics exceeds the theoretical bound. Moreover, we also observe that the bound becomes tighter when the $\ell_2$ radius $r$ increases and when the attack size $\epsilon$ decreases. Besides, since IGR is more robust than IG-NORM \citep{wang2022exploiting}, we can observe that the lower bound is also a valid measurement of the robustness of the models. 

In Figure~\ref{fig:cifar_bound}, we show the gap between the theoretical lower bound and the empirical cosine similarity between the original and perturbed attributions. The results are evaluated on CIFAR-10 using IGR. Out of 10,000 testing samples, the 200 with the smallest gaps are chosen for each pair of $r$ and $\epsilon$, and the gaps are sorted for better visualization. We notice that the gaps between the theoretical bound and empirical cosine similarity are positive and small, which shows the validity and the tightness of the proposed bound. 

In Table~\ref{tbl:imagenet}, we also include the theoretical lower bound evaluated on ImageNet to show that the proposed method is also applicable to large-scale datasets. Since the current attribution attacks and attribution defense methods do not scale to large-scale datasets, we only include the non-robust model. Since our method does not rely on the second-order derivative of the output with respect to the input, it can be scaled to ImageNet-size datasets. For the experiments on ResNet-50, each certification for one single sample takes around $15$ seconds. We observe that the reported bounds are also consistent with our theoretical findings.

\section{Conclusion and limitations}\label{sec:conclusion}
\begin{figure*}[t]
    \centering
    \begin{subfigure}{0.32\textwidth}
        \centering
        \includegraphics[width=\textwidth]{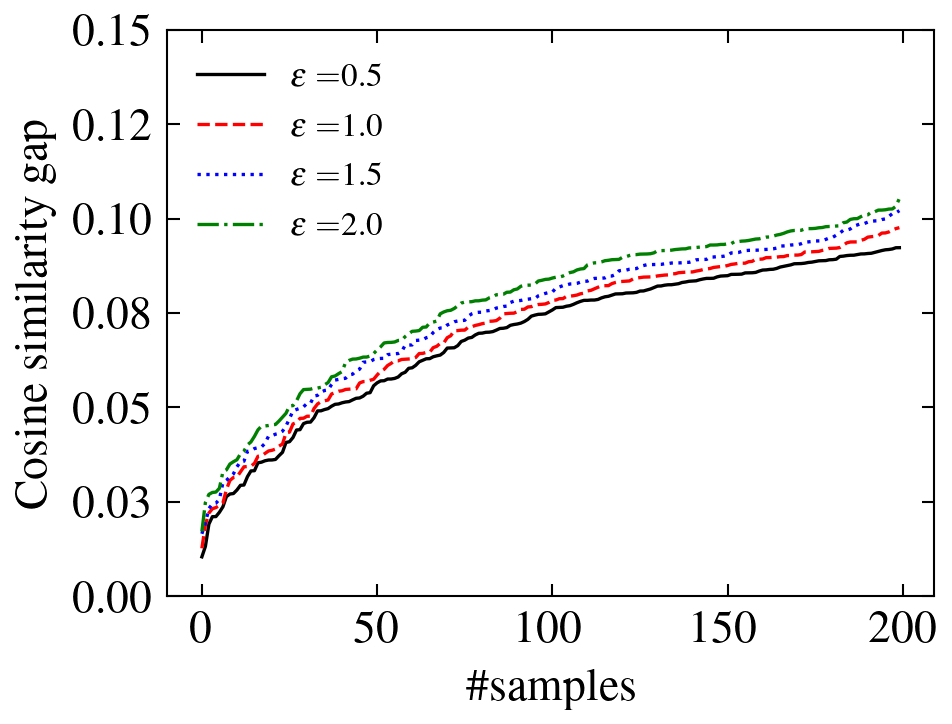}
        \caption{$r=1.5$}
        \label{fig:sub1}
      \end{subfigure}
      \begin{subfigure}{0.32\textwidth}
        \centering
        \includegraphics[width=\textwidth]{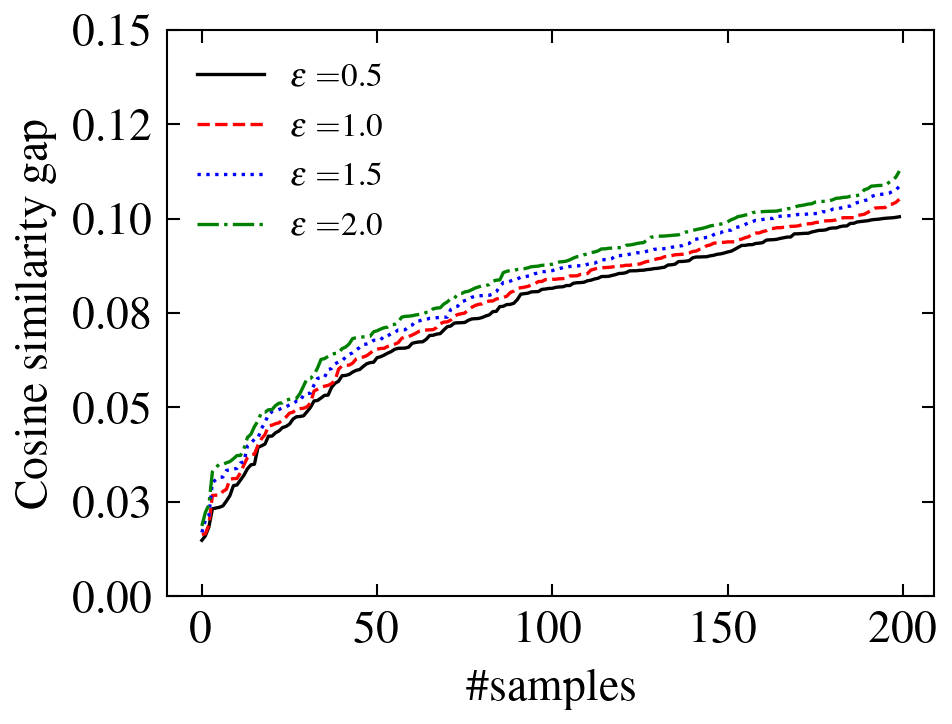}
        \caption{$r=2.0$}
        \label{fig:sub2}
      \end{subfigure}
      \begin{subfigure}{0.32\textwidth}
        \centering
        \includegraphics[width=\textwidth]{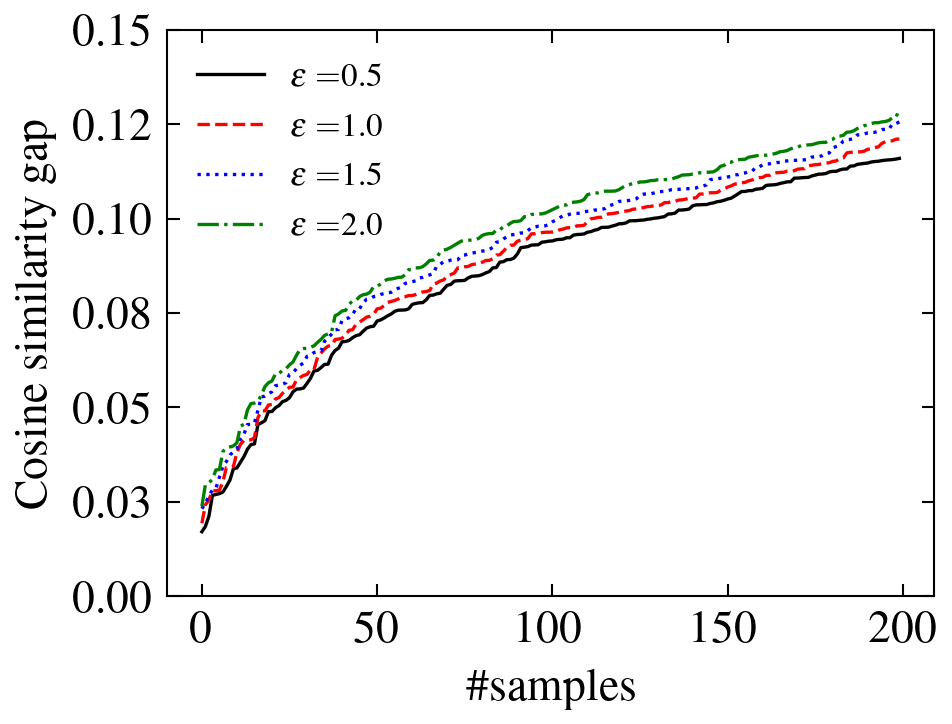}
        \caption{$r=2.5$}
        \label{fig:sub4}
      \end{subfigure}

    \caption{The gap between theoretical bounds and empirical cosine similarity between original and perturbed attribution evaluated on CIFAR-10 using IGR.}\label{fig:cifar_bound}
\end{figure*}
In this paper, we attempt to use the uniformly smoothed attribution to certify the attribution robustness evaluated by cosine similarity. The smoothed attribution is constructed by taking the mean of the attributions computed from input samples augmented by noises uniformly sampled from an $\ell_2$ ball. It is proved that the cosine similarity between the original and perturbed smoothed attribution is lower-bounded based on a geometric formulation related to the volume of the hyperspherical cap. Alternative formulations are provided to find the maximum allowable size of perturbations and the minimum radius of smoothing in order to maintain the attribution robustness. The method works on bounded gradient-based attribution methods for all convolutional neural networks and is scalable to large datasets. We empirically demonstrate that the method can be used to certify the attribution robustness, using the well-bounded integrated gradients, and the state-of-the-art attributional robust models on MNIST, CIFAR-10 and ImageNet. 

\section{Limitations and broader impacts}

The method in paper can be generally applied to any convolutional neural networks and any bounded attribution methods. Although the existence of an upper bound has been shown for all gradient-based methods, in some extreme cases when the upper bound for certain attribution is trivial, \emph{i.e.}, an extremely large value, the proposed lower bound for attribution robustness also becomes trivial. In future work, we will investigate the upper bound for other bounded attribution methods and provide the corresponding lower bounds for attribution robustness. 

Our work attempts to draw the attention of the community to the need for a guarantee of attribution robustness. With the increasingly large number of applications of deep learning, the transparency and trustworthiness of neural networks are crucial for users to understand the outcomes and to avoid any abuse of the techniques. While the study of the security of networks could reveal their potential risks that can be misused, we believe this work has more positive impacts to the community and can encourage the development of more trustworthy deep learning applications.

\bibliography{nips2024}

\begin{thebibliography}{50}
\providecommand{\natexlab}[1]{#1}
\providecommand{\url}[1]{\texttt{#1}}
\expandafter\ifx\csname urlstyle\endcsname\relax
  \providecommand{\doi}[1]{doi: #1}\else
  \providecommand{\doi}{doi: \begingroup \urlstyle{rm}\Url}\fi

\bibitem[Antoniadi et~al.(2021)Antoniadi, Du, Guendouz, Wei, Mazo, Becker, and Mooney]{antoniadi2021current}
Anna~Markella Antoniadi, Yuhan Du, Yasmine Guendouz, Lan Wei, Claudia Mazo, Brett~A Becker, and Catherine Mooney.
\newblock Current challenges and future opportunities for xai in machine learning-based clinical decision support systems: a systematic review.
\newblock \emph{Applied Sciences}, 11\penalty0 (11):\penalty0 5088, 2021.

\bibitem[Atakishiyev et~al.(2021)Atakishiyev, Salameh, Yao, and Goebel]{atakishiyev2021explainable}
Shahin Atakishiyev, Mohammad Salameh, Hengshuai Yao, and Randy Goebel.
\newblock Explainable artificial intelligence for autonomous driving: A comprehensive overview and field guide for future research directions.
\newblock \emph{arXiv preprint arXiv:2112.11561}, 2021.

\bibitem[Athalye et~al.(2018)Athalye, Carlini, and Wagner]{athalye2018obfuscated}
Anish Athalye, Nicholas Carlini, and David Wagner.
\newblock Obfuscated gradients give a false sense of security: Circumventing defenses to adversarial examples.
\newblock In \emph{International conference on machine learning}, pp.\  274--283. PMLR, 2018.

\bibitem[Bach et~al.(2015)Bach, Binder, Montavon, Klauschen, M{\"u}ller, and Samek]{bach2015pixel}
Sebastian Bach, Alexander Binder, Gr{\'e}goire Montavon, Frederick Klauschen, Klaus-Robert M{\"u}ller, and Wojciech Samek.
\newblock On pixel-wise explanations for non-linear classifier decisions by layer-wise relevance propagation.
\newblock \emph{PloS one}, 10\penalty0 (7):\penalty0 e0130140, 2015.

\bibitem[Boopathy et~al.(2020)Boopathy, Liu, Zhang, Liu, Chen, Chang, and Daniel]{boopathy2020proper}
Akhilan Boopathy, Sijia Liu, Gaoyuan Zhang, Cynthia Liu, Pin-Yu Chen, Shiyu Chang, and Luca Daniel.
\newblock Proper network interpretability helps adversarial robustness in classification.
\newblock In \emph{International Conference on Machine Learning}, pp.\  1014--1023. PMLR, 2020.

\bibitem[Box \& Muller(1958)Box and Muller]{box1958note}
George~EP Box and Mervin~E Muller.
\newblock A note on the generation of random normal deviates.
\newblock \emph{The annals of mathematical statistics}, 29\penalty0 (2):\penalty0 610--611, 1958.

\bibitem[Brown et~al.(2020)Brown, Mann, Ryder, Subbiah, Kaplan, Dhariwal, Neelakantan, Shyam, Sastry, Askell, et~al.]{brown2020language}
Tom Brown, Benjamin Mann, Nick Ryder, Melanie Subbiah, Jared~D Kaplan, Prafulla Dhariwal, Arvind Neelakantan, Pranav Shyam, Girish Sastry, Amanda Askell, et~al.
\newblock Language models are few-shot learners.
\newblock \emph{Advances in neural information processing systems}, 33:\penalty0 1877--1901, 2020.

\bibitem[Burton et~al.(2020)Burton, Habli, Lawton, McDermid, Morgan, and Porter]{burton2020mind}
Simon Burton, Ibrahim Habli, Tom Lawton, John McDermid, Phillip Morgan, and Zoe Porter.
\newblock Mind the gaps: Assuring the safety of autonomous systems from an engineering, ethical, and legal perspective.
\newblock \emph{Artificial Intelligence}, 279:\penalty0 103201, 2020.

\bibitem[Carlini \& Wagner(2017)Carlini and Wagner]{carlini2017towards}
Nicholas Carlini and David Wagner.
\newblock Towards evaluating the robustness of neural networks.
\newblock In \emph{2017 IEEE Symposium on Security and Privacy (SP)}, pp.\  39--57. IEEE Computer Society, 2017.

\bibitem[Chen et~al.(2019)Chen, Wu, Rastogi, Liang, and Jha]{chen2019robust}
Jiefeng Chen, Xi~Wu, Vaibhav Rastogi, Yingyu Liang, and Somesh Jha.
\newblock Robust attribution regularization.
\newblock In \emph{Advances in Neural Information Processing Systems}, pp.\  14300--14310, 2019.

\bibitem[Cohen et~al.(2019)Cohen, Rosenfeld, and Kolter]{cohen2019certified}
Jeremy Cohen, Elan Rosenfeld, and Zico Kolter.
\newblock Certified adversarial robustness via randomized smoothing.
\newblock In \emph{International Conference on Machine Learning}, pp.\  1310--1320. PMLR, 2019.

\bibitem[Das \& Geisler(2021)Das and Geisler]{das2021method}
Abhranil Das and Wilson~S Geisler.
\newblock A method to integrate and classify normal distributions.
\newblock \emph{Journal of Vision}, 21\penalty0 (10):\penalty0 1--1, 2021.

\bibitem[Davies(1980)]{davies1980distribution}
Robert~B Davies.
\newblock The distribution of a linear combination of $\chi$2 random variables.
\newblock \emph{Journal of the Royal Statistical Society Series C: Applied Statistics}, 29\penalty0 (3):\penalty0 323--333, 1980.

\bibitem[Dombrowski et~al.(2019)Dombrowski, Alber, Anders, Ackermann, M{\"u}ller, and Kessel]{dombrowski2019explanations}
Ann-Kathrin Dombrowski, Maximillian Alber, Christopher Anders, Marcel Ackermann, Klaus-Robert M{\"u}ller, and Pan Kessel.
\newblock Explanations can be manipulated and geometry is to blame.
\newblock In \emph{Advances in Neural Information Processing Systems}, pp.\  13589--13600, 2019.

\bibitem[Doshi-Velez \& Kim(2017)Doshi-Velez and Kim]{doshi2017towards}
Finale Doshi-Velez and Been Kim.
\newblock Towards a rigorous science of interpretable machine learning.
\newblock \emph{arXiv preprint arXiv:1702.08608}, 2017.

\bibitem[Du et~al.(2022)Du, Rafferty, McAuliffe, Wei, and Mooney]{du2022explainable}
Yuhan Du, Anthony~R Rafferty, Fionnuala~M McAuliffe, Lan Wei, and Catherine Mooney.
\newblock An explainable machine learning-based clinical decision support system for prediction of gestational diabetes mellitus.
\newblock \emph{Scientific Reports}, 12\penalty0 (1):\penalty0 1170, 2022.

\bibitem[Duchesne \& De~Micheaux(2010)Duchesne and De~Micheaux]{duchesne2010computing}
Pierre Duchesne and Pierre~Lafaye De~Micheaux.
\newblock Computing the distribution of quadratic forms: Further comparisons between the liu--tang--zhang approximation and exact methods.
\newblock \emph{Computational Statistics \& Data Analysis}, 54\penalty0 (4):\penalty0 858--862, 2010.

\bibitem[Feller(1991)]{feller1991introduction}
William Feller.
\newblock \emph{An introduction to probability theory and its applications, Volume 2}, volume~81.
\newblock John Wiley \& Sons, 1991.

\bibitem[Ghorbani et~al.(2019)Ghorbani, Abid, and Zou]{ghorbani2019interpretation}
Amirata Ghorbani, Abubakar Abid, and James Zou.
\newblock Interpretation of neural networks is fragile.
\newblock In \emph{Proceedings of the AAAI Conference on Artificial Intelligence}, volume~33, pp.\  3681--3688, 2019.

\bibitem[Goodman \& Flaxman(2017)Goodman and Flaxman]{goodman2017european}
Bryce Goodman and Seth Flaxman.
\newblock European union regulations on algorithmic decision-making and a “right to explanation”.
\newblock \emph{AI magazine}, 38\penalty0 (3):\penalty0 50--57, 2017.

\bibitem[He et~al.(2016)He, Zhang, Ren, and Sun]{he2016deep}
Kaiming He, Xiangyu Zhang, Shaoqing Ren, and Jian Sun.
\newblock Deep residual learning for image recognition.
\newblock In \emph{Proceedings of the IEEE conference on computer vision and pattern recognition}, pp.\  770--778, 2016.

\bibitem[Hrinivich et~al.(2023)Hrinivich, Wang, and Wang]{hrinivich2023interpretable}
William~Thomas Hrinivich, Tonghe Wang, and Chunhao Wang.
\newblock Interpretable and explainable machine learning models in oncology.
\newblock \emph{Frontiers in Oncology}, 13:\penalty0 1184428, 2023.

\bibitem[Ivankay et~al.(2020)Ivankay, Girardi, Marchiori, and Frossard]{ivankay2020far}
Adam Ivankay, Ivan Girardi, Chiara Marchiori, and Pascal Frossard.
\newblock Far: A general framework for attributional robustness.
\newblock \emph{arXiv preprint arXiv:2010.07393}, 2020.

\bibitem[Krizhevsky(2009)]{krizhevsky2009learning}
Alex Krizhevsky.
\newblock Learning multiple layers of features from tiny images.
\newblock 2009.

\bibitem[Kumar \& Goldstein(2021)Kumar and Goldstein]{kumar2021center}
Aounon Kumar and Tom Goldstein.
\newblock Center smoothing: Certified robustness for networks with structured outputs.
\newblock \emph{Advances in Neural Information Processing Systems}, 34:\penalty0 5560--5575, 2021.

\bibitem[Kwon \& Zou(2022)Kwon and Zou]{kwon2022weightedshap}
Yongchan Kwon and James~Y Zou.
\newblock Weightedshap: analyzing and improving shapley based feature attributions.
\newblock \emph{Advances in Neural Information Processing Systems}, 35:\penalty0 34363--34376, 2022.

\bibitem[LeCun et~al.(2010)LeCun, Cortes, and Burges]{lecun2010mnist}
Yann LeCun, Corinna Cortes, and CJ~Burges.
\newblock Mnist handwritten digit database.
\newblock \emph{ATT Labs [Online]. Available: http://yann.lecun.com/exdb/mnist}, 2, 2010.

\bibitem[Lecuyer et~al.(2019)Lecuyer, Atlidakis, Geambasu, Hsu, and Jana]{lecuyer2019certified}
Mathias Lecuyer, Vaggelis Atlidakis, Roxana Geambasu, Daniel Hsu, and Suman Jana.
\newblock Certified robustness to adversarial examples with differential privacy.
\newblock In \emph{2019 IEEE Symposium on Security and Privacy (SP)}, pp.\  656--672. IEEE, 2019.

\bibitem[Li(2010)]{li2010concise}
Shengqiao Li.
\newblock Concise formulas for the area and volume of a hyperspherical cap.
\newblock \emph{Asian Journal of Mathematics \& Statistics}, 4\penalty0 (1):\penalty0 66--70, 2010.

\bibitem[Liu et~al.(2018)Liu, Cheng, Zhang, and Hsieh]{liu2018towards}
Xuanqing Liu, Minhao Cheng, Huan Zhang, and Cho-Jui Hsieh.
\newblock Towards robust neural networks via random self-ensemble.
\newblock In \emph{Proceedings of the European Conference on Computer Vision (ECCV)}, pp.\  369--385, 2018.

\bibitem[Lundberg \& Lee(2017)Lundberg and Lee]{lundberg2017unified}
Scott~M Lundberg and Su-In Lee.
\newblock A unified approach to interpreting model predictions.
\newblock \emph{Advances in neural information processing systems}, 30, 2017.

\bibitem[Madry et~al.(2018)Madry, Makelov, Schmidt, Tsipras, and Vladu]{madry2018towards}
Aleksander Madry, Aleksandar Makelov, Ludwig Schmidt, Dimitris Tsipras, and Adrian Vladu.
\newblock Towards deep learning models resistant to adversarial attacks.
\newblock In \emph{International Conference on Learning Representations}, 2018.
\newblock URL \url{https://openreview.net/forum?id=rJzIBfZAb}.

\bibitem[Paulavi{\v{c}}ius \& {\v{Z}}ilinskas(2006)Paulavi{\v{c}}ius and {\v{Z}}ilinskas]{paulavivcius2006analysis}
Remigijus Paulavi{\v{c}}ius and Julius {\v{Z}}ilinskas.
\newblock Analysis of different norms and corresponding lipschitz constants for global optimization.
\newblock \emph{Technological and Economic Development of Economy}, 12\penalty0 (4):\penalty0 301--306, 2006.

\bibitem[Rao et~al.(2022)Rao, B{\"o}hle, and Schiele]{rao2022towards}
Sukrut Rao, Moritz B{\"o}hle, and Bernt Schiele.
\newblock Towards better understanding attribution methods.
\newblock In \emph{Proceedings of the IEEE/CVF Conference on Computer Vision and Pattern Recognition}, pp.\  10223--10232, 2022.

\bibitem[Russakovsky et~al.(2015)Russakovsky, Deng, Su, Krause, Satheesh, Ma, Huang, Karpathy, Khosla, Bernstein, et~al.]{russakovsky2015imagenet}
Olga Russakovsky, Jia Deng, Hao Su, Jonathan Krause, Sanjeev Satheesh, Sean Ma, Zhiheng Huang, Andrej Karpathy, Aditya Khosla, Michael Bernstein, et~al.
\newblock Imagenet large scale visual recognition challenge.
\newblock \emph{International journal of computer vision}, 115:\penalty0 211--252, 2015.

\bibitem[Salman et~al.(2019)Salman, Li, Razenshteyn, Zhang, Zhang, Bubeck, and Yang]{salman2019provably}
Hadi Salman, Jerry Li, Ilya Razenshteyn, Pengchuan Zhang, Huan Zhang, Sebastien Bubeck, and Greg Yang.
\newblock Provably robust deep learning via adversarially trained smoothed classifiers.
\newblock \emph{Advances in Neural Information Processing Systems}, 32, 2019.

\bibitem[Sarkar et~al.(2021)Sarkar, Sarkar, and Balasubramanian]{sarkar2021enhanced}
Anindya Sarkar, Anirban Sarkar, and Vineeth~N Balasubramanian.
\newblock Enhanced regularizers for attributional robustness.
\newblock In \emph{Proceedings of the AAAI Conference on Artificial Intelligence}, volume~35, pp.\  2532--2540, 2021.

\bibitem[Simonyan et~al.(2014)Simonyan, Vedaldi, and Zisserman]{simonyan2013deep}
Karen Simonyan, Andrea Vedaldi, and Andrew Zisserman.
\newblock Deep inside convolutional networks: Visualising image classification models and saliency maps.
\newblock In \emph{2nd International Conference on Learning Representations, {ICLR} 2014, Banff, AB, Canada, April 14-16, 2014, Workshop Track Proceedings}, 2014.

\bibitem[Singh et~al.(2020)Singh, Kumari, Mangla, Sinha, Balasubramanian, and Krishnamurthy]{singh2020attributional}
Mayank Singh, Nupur Kumari, Puneet Mangla, Abhishek Sinha, Vineeth~N Balasubramanian, and Balaji Krishnamurthy.
\newblock Attributional robustness training using input-gradient spatial alignment.
\newblock In \emph{Computer Vision--ECCV 2020: 16th European Conference, Glasgow, UK, August 23--28, 2020, Proceedings, Part XXVII 16}, pp.\  515--533. Springer, 2020.

\bibitem[Smilkov et~al.(2017)Smilkov, Thorat, Kim, Vi{\'e}gas, and Wattenberg]{smilkov2017smoothgrad}
Daniel Smilkov, Nikhil Thorat, Been Kim, Fernanda Vi{\'e}gas, and Martin Wattenberg.
\newblock Smoothgrad: removing noise by adding noise.
\newblock \emph{arXiv preprint arXiv:1706.03825}, 2017.

\bibitem[Srinivas \& Fleuret(2019)Srinivas and Fleuret]{srinivas2019full}
Suraj Srinivas and Fran{\c{c}}ois Fleuret.
\newblock Full-gradient representation for neural network visualization.
\newblock \emph{Advances in neural information processing systems}, 32, 2019.

\bibitem[Sundararajan \& Najmi(2020)Sundararajan and Najmi]{sundararajan2020many}
Mukund Sundararajan and Amir Najmi.
\newblock The many shapley values for model explanation.
\newblock In \emph{International conference on machine learning}, pp.\  9269--9278. PMLR, 2020.

\bibitem[Sundararajan et~al.(2017)Sundararajan, Taly, and Yan]{sundararajan2017axiomatic}
Mukund Sundararajan, Ankur Taly, and Qiqi Yan.
\newblock Axiomatic attribution for deep networks.
\newblock In \emph{International Conference on Machine Learning}, pp.\  3319--3328. PMLR, 2017.

\bibitem[Teng et~al.(2020)Teng, Lee, and Yuan]{teng2020ell}
Jiaye Teng, Guang-He Lee, and Yang Yuan.
\newblock {\$}{\textbackslash}ell{\_}1{\$} adversarial robustness certificates: a randomized smoothing approach, 2020.
\newblock URL \url{https://openreview.net/forum?id=H1lQIgrFDS}.

\bibitem[Wang \& Kong(2022)Wang and Kong]{wang2022exploiting}
Fan Wang and Adams Wai-Kin Kong.
\newblock Exploiting the relationship between kendall's rank correlation and cosine similarity for attribution protection.
\newblock \emph{Advances in Neural Information Processing Systems}, 35:\penalty0 20580--20591, 2022.

\bibitem[Wang \& Kong(2023)Wang and Kong]{wang2023practical}
Fan Wang and Adams Wai-Kin Kong.
\newblock A practical upper bound for the worst-case attribution deviations.
\newblock In \emph{Proceedings of the IEEE/CVF Conference on Computer Vision and Pattern Recognition}, pp.\  24616--24625, 2023.

\bibitem[Wong \& Kolter(2018)Wong and Kolter]{wong2018provable}
Eric Wong and Zico Kolter.
\newblock Provable defenses against adversarial examples via the convex outer adversarial polytope.
\newblock In \emph{International Conference on Machine Learning}, pp.\  5286--5295. PMLR, 2018.

\bibitem[Yang et~al.(2020)Yang, Duan, Hu, Salman, Razenshteyn, and Li]{yang2020randomized}
Greg Yang, Tony Duan, J~Edward Hu, Hadi Salman, Ilya Razenshteyn, and Jerry Li.
\newblock Randomized smoothing of all shapes and sizes.
\newblock In \emph{International Conference on Machine Learning}, pp.\  10693--10705. PMLR, 2020.

\bibitem[Yeh et~al.(2019)Yeh, Hsieh, Suggala, Inouye, and Ravikumar]{yeh2019fidelity}
Chih-Kuan Yeh, Cheng-Yu Hsieh, Arun Suggala, David~I Inouye, and Pradeep~K Ravikumar.
\newblock On the (in) fidelity and sensitivity of explanations.
\newblock \emph{Advances in Neural Information Processing Systems}, 32, 2019.

\bibitem[Zhang et~al.(2019)Zhang, Yu, Jiao, Xing, El~Ghaoui, and Jordan]{zhang2019theoretically}
Hongyang Zhang, Yaodong Yu, Jiantao Jiao, Eric Xing, Laurent El~Ghaoui, and Michael Jordan.
\newblock Theoretically principled trade-off between robustness and accuracy.
\newblock In \emph{International Conference on Machine Learning}, pp.\  7472--7482. PMLR, 2019.

\end{thebibliography}
\bibliographystyle{citation}

\appendix
\section{Proofs}\label{apx:proof}
\begin{lemma}{(\citet{paulavivcius2006analysis})}\label{lemma:lips}
    For $L$-Lipschitz function $f:\mathbb{R}^d\rightarrow \mathbb{R}$, 
    \begin{equation}
        \vert f(\vx) - f(\bm{y})\vert \leq L\Vert \vx - \bm{y}\Vert_q
    \end{equation}
    where $L=\max\left\{\Vert \nabla f(\vx)\Vert_p: \vx\in S\right\}$ is Lipschitz constant. Thus, $\Vert\nabla f(\vx)\Vert_p\leq L$.
\end{lemma}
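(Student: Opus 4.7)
The plan is to prove the Lipschitz inequality by combining the fundamental theorem of calculus (in its line-integral form for differentiable multivariable functions) with Hölder's inequality, interpreting $p$ and $q$ as conjugate exponents with $1/p + 1/q = 1$. The statement then reduces to a bound on a one-dimensional integral of a directional derivative along the line segment joining $\vx$ and $\bm{y}$. The final assertion $\Vert\nabla f(\vx)\Vert_p\leq L$ is then immediate from the definition of $L$ as a supremum of gradient norms.

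First I would parameterise the segment between $\vx$ and $\bm{y}$ by $\bm{z}(t) = \vx + t(\bm{y}-\vx)$ for $t\in[0,1]$ and write
\begin{equation*}
f(\bm{y}) - f(\vx) = \int_0^1 \frac{d}{dt}f(\bm{z}(t))\, dt = \int_0^1 \nabla f(\bm{z}(t))^T (\bm{y}-\vx)\, dt.
\end{equation*}
Taking absolute values and pulling them inside the integral by the triangle inequality gives
\begin{equation*}
\vert f(\bm{y}) - f(\vx)\vert \leq \int_0^1 \bigl\vert \nabla f(\bm{z}(t))^T (\bm{y}-\vx)\bigr\vert\, dt.
\end{equation*}

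Next I would apply Hölder's inequality pointwise in $t$ to the inner product, yielding $\vert \nabla f(\bm{z}(t))^T (\bm{y}-\vx)\vert \leq \Vert \nabla f(\bm{z}(t))\Vert_p \cdot \Vert \bm{y}-\vx\Vert_q$. Then, using that $\Vert \nabla f(\bm{z}(t))\Vert_p \leq L$ by the definition of $L$ as the supremum of gradient $p$-norms on the domain $S$ (assuming the segment lies in $S$, which is standard for convex $S$), the integrand is bounded uniformly by $L\Vert\bm{y}-\vx\Vert_q$. Integrating over $[0,1]$ contributes a factor of one, giving $\vert f(\vx) - f(\bm{y})\vert \leq L\Vert \vx - \bm{y}\Vert_q$. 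The final conclusion $\Vert\nabla f(\vx)\Vert_p \leq L$ is simply a restatement of the definition of $L$ as the maximum.

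The only subtlety, rather than a real obstacle, is justifying that $L$ (defined as a supremum of gradient norms over $S$) controls the behaviour of $f$ along the entire segment: this requires either that $S$ is convex so the segment lies inside $S$, or a mild extension argument. Since the lemma is cited verbatim from \citet{paulavivcius2006analysis}, I would simply invoke the convexity (or path-connectedness along line segments) of the ambient domain and cite the reference rather than belabor the point.
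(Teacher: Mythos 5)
Your proof is correct: the fundamental-theorem-of-calculus representation of $f(\bm{y})-f(\vx)$ along the segment, followed by H\"older's inequality with conjugate exponents $1/p+1/q=1$ and the uniform bound $\Vert\nabla f(\bm{z}(t))\Vert_p\leq L$, is the standard argument, and your caveat about convexity of $S$ is the right one to flag. Note that the paper itself gives no proof of this lemma---its ``proof'' consists entirely of deferring to \citet{paulavivcius2006analysis}---so you have simply supplied the classical argument that the cited reference uses; there is no divergence in approach to report.
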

\begin{proof}
    Refer to \citet{paulavivcius2006analysis} for the proof.
\end{proof}

\generalthm*
\begin{proof}
    As defined in Eqn.~(\ref{eqn:smoothed_attribution}) 
    \begin{equation}
        h(\vx) = \mathbb{E}_{\veta\sim\mathcal{B}(\bm{0}; r)}[g(\vx+\veta)] = \frac{1}{V_{\mathcal{S}}}\int_{\veta\sim\mathcal{B}(\bm{0}; r)}g(\vx+\veta)d\veta
    \end{equation}
    where $V_{\mathcal{S}}$ is the volume of the $\ell_p$-ball with radius $r$. Similarly, let $\tilde{\vx} = \vx + \bm{\delta}$, where $\bm{\delta}\in\mathbb{R}^d$ is a vector and $\Vert\bm{\delta}\Vert_2\leq\epsilon$. Then, we have
    \begin{equation}
        h(\tilde{\vx}) = \frac{1}{V_{\mathcal{S}}}\int_{\veta\sim\mathcal{B}(\bm{0}; r)}g(\tilde{\vx}+\veta)d\veta
    \end{equation}
    We note that when $\veta\sim\mathcal{B}(\bm{0}; r)$, $\vx + \veta\sim\mathcal{B}(\vx; r)$ and $\tilde{\vx}+\veta\sim\mathcal{B}(\tilde{\vx}; r)$. 
    We then rewrite $h(\vx)$ and $h(\tilde{\vx})$ as follows:
    \begin{equation}
        h(\vx) = \underbrace{\frac{1}{V_{\mathcal{S}}}\int_{\vx\sim \mathcal{B}(\vx;r)\setminus\mathcal{B}(\tilde{\vx}; r)}g(\vx)d\vx}_{R_1} + \underbrace{\frac{1}{V_{\mathcal{S}}}\int_{\vx\sim \mathcal{B}(\tilde{\vx}; r)\cap\mathcal{B}(\vx; r)}g(\vx)d\vx}_{R_2}
    \end{equation}
    and
    \begin{equation}
        h(\tilde{\vx}) = \underbrace{\frac{1}{V_{\mathcal{S}}}\int_{\vx\sim \mathcal{B}(\tilde{\vx}; r)\cap\mathcal{B}(\vx; r)}g(\vx)d\vx}_{R_2} + \underbrace{\frac{1}{V_{\mathcal{S}}}\int_{\vx\sim \mathcal{B}(\tilde{\vx}; r)\setminus\mathcal{B}(\vx; r)}g(\vx)d\vx}_{R_3}
    \end{equation}
    Hence, 
    \begin{equation}
        h(\tilde{\vx}) = h(\vx) - R_1 + R_3
    \end{equation}
    Denote $a\vv = R_3-R_1$, where $\vv$ is a unit vector in the same direction of $R_3-R_1$ and $a=\Vert R_3-R_1\Vert_2$ is a scalar with the same magnitude of $R_3-R_1$. Then, we have
    \begin{equation}
        \cos(h(\vx), h(\tilde{\vx})) = \frac{h(\vx)^\top}{\Vert h(\vx)\Vert_2}\left(\frac{h(\vx)+a\vv}{\Vert h(\vx) + a\vv\Vert_2}\right)
    \end{equation}
    Note that the attribution $g(\vx)$ is upper bounded by $M$, specifically, $\Vert g(\vx)\Vert_2\leq M$, for some constant $M$. Thus, we can derive that
    \begin{align}
        a&=\Vert R_3-R_1\Vert_2 \\
        &= \left\Vert \frac{1}{V_{\mathcal{S}}}\left(\int_{\vx\sim \mathcal{B}(\tilde{\vx};r)\setminus\mathcal{B}(\vx; r)}g(\vx)d\vx - \int_{\vx\sim \mathcal{B}(\vx;r)\setminus\mathcal{B}(\tilde{\vx}; r)}g(\vx)d\vx\right)\right\Vert_2\\
        &\leq\frac{1}{V_{\mathcal{S}}}\left(\left\Vert\int_{\vx\sim \mathcal{B}(\tilde{\vx};r)\setminus\mathcal{B}(\vx; r)}g(\vx)d\vx\right\Vert_2 + \left\Vert\int_{\vx\sim \mathcal{B}(\vx;r)\setminus\mathcal{B}(\tilde{\vx}; r)}g(\vx)d\vx\right\Vert_2\right)\\
        &\leq\frac{1}{V_{\mathcal{S}}}\left(\int_{\vx\sim \mathcal{B}(\tilde{\vx};r)\setminus\mathcal{B}(\vx; r)}\left\Vert g(\vx)\right\Vert_2 d\vx + \int_{\vx\sim \mathcal{B}(\vx;r)\setminus\mathcal{B}(\tilde{\vx}; r)}\left\Vert g(\vx)\right\Vert_2 d\vx\right)\\
        &\leq\frac{1}{V_{\mathcal{S}}}\left(\int_{\vx\sim \mathcal{B}(\tilde{\vx};r)\setminus\mathcal{B}(\vx; r)}M d\vx + \int_{\vx\sim \mathcal{B}(\vx;r)\setminus\mathcal{B}(\tilde{\vx}; r)}M d\vx\right)\\
        &=M\times\frac{V_{\mathcal{B}(\vx;r)\setminus\mathcal{B}(\tilde{\vx}; r)\cup \mathcal{B}(\tilde{\vx}; r)\setminus\mathcal{B}(\vx; r)}}{V_{\mathcal{S}}} = M\frac{V_U}{V_{\mathcal{S}}}
    \end{align}
    Thus, the lower bound of $\cos(h(\vx), h(\tilde{\vx}))$ can be found by solving the optimization problem \footnote{$\Vert\cdot\Vert$ in the following content denotes the $\ell_2$-norm unless otherwise specified.}
    \begin{equation}
        \begin{aligned}
            \min_{\vv}\qquad &\frac{h(\vx)^\top}{\Vert h(\vx)\Vert}\left(\frac{h(\vx)+a\vv}{\Vert h(\vx) + a\vv\Vert}\right)\\
            \text{s.t.}\qquad &\Vert \vv\Vert = 1\\
            & a \leq M\frac{V_U}{V_{\mathcal{S}}}
        \end{aligned}
    \end{equation}

    Since $h(\vx)$ and $h(\tilde{\vx})$ form a spherical cone, we can decompose $\vv$ by $\vv=\cos\theta\vv_{\parallel} + \sin\theta\vv_{\perp}$, where $\vv_{\parallel}$ and $\vv_{\perp}$ are two orthogonal unit vectors such that $h^{\top}(\vx) \vv_{\perp}=0$ and $\vv_{\parallel} = h(\vx)/\Vert h(\vx)\Vert$. Then, the optimization problem can be rewritten as
    \begin{align}
        &\min\quad\vv_{\parallel}^{\top}\left(\frac{h(\vx) + a(\cos\theta\vv_{\parallel} + \sin\theta\vv_{\perp})}{\Vert h(\vx) + a(\cos\theta\vv_{\parallel} + \sin\theta\vv_{\perp})\Vert}\right)\\
        \Rightarrow &\min\quad\vv_{\parallel}^{\top}\left(\frac{\Vert h(\vx)\Vert\vv_{\parallel} + a(\cos\theta\vv_{\parallel} + a\sin\theta\vv_{\perp})}{\Vert \Vert h(\vx)\Vert\vv_{\parallel} + a(\cos\theta\vv_{\parallel} + a\sin\theta\vv_{\perp})\Vert}\right)\\
        \Rightarrow &\min\quad\frac{(\Vert h(\vx)\Vert + a\cos\theta) \vv_{\parallel}^\top\vv_{\parallel} + a\sin\theta\vv_{\parallel}^{\top}\vv_{\perp}}{\sqrt{(\Vert h(\vx)\Vert + a\cos\theta)^2\vv_{\parallel}^{\top}\vv_{\parallel} + (a\sin\theta)^2\vv_{\perp}^{\top}\vv_{\perp}}}&\\
        \Rightarrow &\min\quad\frac{\Vert h(\vx)\Vert + a\cos\theta}{\sqrt{(\Vert h(\vx)\Vert + a\cos\theta)^2 + (a\sin\theta)^2}}
    \end{align}
    Since $h(\vx)$ is known for a given sample, the optimization problem can be written as follows by taking $\Vert h(\vx)\Vert = c$:
    \begin{equation}
        \begin{aligned}
            \min\quad &\frac{c + a\cos\theta}{\sqrt{(c + a\cos\theta)^2 + (a\sin\theta)^2}}\\
            \text{s.t.}\quad &a \leq M\frac{V_U}{V_{\mathcal{S}}}
        \end{aligned}
    \end{equation}
    We now consider the Lagrange function of the optimization problem:
    \begin{equation}
        \mathcal{L}(x, \theta, \lambda) = \frac{c + a\cos\theta}{\sqrt{(c + a\cos\theta)^2 + (a\sin\theta)^2}} - \lambda(a - M\frac{V_U}{V_{\mathcal{S}}})
    \end{equation}
    Taking the derivative of $\mathcal{L}$ with respect to $a$ and $\theta$ and setting them to zero, we have
    \begin{equation}
        \frac{\partial}{\partial a}\mathcal{L} = \frac{1}{T^2}\left(T\cos\theta - \frac{1}{T}\left(c\cos\theta + 2a\right)\times(c+a\cos\theta)\right) - \lambda = 0
    \end{equation}
    and
    \begin{equation}
        \frac{\partial}{\partial \theta}\mathcal{L} = \frac{1}{T^2}\left(-a\sin\theta\cdot T + \frac{1}{T}\left(c^2a\sin\theta + ca^2\sin\theta\cos\theta\right)\right) = 0
    \end{equation}
    where $T = \sqrt{(c + a\cos\theta)^2 + (a\sin\theta)^2}$. Solving the above equations, we have
    \begin{equation}
        \cos\theta = 0 \quad\text{or}\quad a = 0
    \end{equation}
    where $a=0$ reaches the maximum and $\cos\theta=0$ is the minimum. Therefore, the lower bound of $\cos(h(\vx), h(\tilde{\vx}))$ is
    \begin{equation}
        \cos(h(\vx), h(\tilde{\vx})) \geq \frac{c}{\sqrt{c^2 + (M\frac{V_U}{V_{\mathcal{S}}})^2}} = \frac{\Vert h(\vx)\Vert}{\sqrt{\Vert h(\vx)\Vert^2 + (MV_U/V_{\mathcal{S}})^2}}
    \end{equation}
\end{proof}

\corollarymaxeps*
\begin{proof}
    Corollary 1 can be obtained by fixing $T$ and taking $r$ as unknown, and fixing $T$ and taking $\epsilon$ as unknown, respectively. We can first derive that 
    \begin{equation}
        I_{(2rh-h^2)/r^2}\left(\frac{d+1}{2},\frac{1}{2}\right) = 1-\frac{\Vert h(x)\Vert_2}{2M}\sqrt{\frac{1}{T^2} - 1}=Z
    \end{equation}
    Using the inverse of the regularized incomplete beta function, \emph{i.e.}, $x=I^{-1}_y(a, b)$, and $h=r-\epsilon/2$, we have
    \begin{equation}
        I^{-1}_Z\left(\frac{d+1}{2},\frac{1}{2}\right) =(2rh-h^2)/r^2 = 1-\frac{\epsilon^2}{4r^2}
    \end{equation}
    The results in Corollary can then be solved accordingly.
\end{proof}

\section{Implementation details}\label{apx:exp_details}
In the experiments, we implemented the $\ell_2$ attribution attack adapted from \citet{ghorbani2019interpretation}. The attack uses top-$k$ intersection version as the loss function. Following previous works, we choose $k=100$ for MNIST and $k=1000$ for CIFAR-10. The number of iterations in PGD-like attack is 200, and the step size is 0.1. As mentioned in the main content, we do not implement the attack on ImageNet since the attribution attacks are not scalable to large size images. In the following parts of this section, we provide more details of evaluations in the experiments.
\subsection{Attribution methods}
We used saliency maps (SM) and integrated gradients (IG) in the evaluation sections. These two methods are defined as follows:
\begin{itemize}
    \item Saliency maps: $\text{SM}(\vx) = \frac{\partial f(\vx)}{\partial \vx}$.
    \item Integrated gradients: $\text{IG}(\vx) = (\vx - \vx')\times\int_{\alpha=0}^1\frac{\partial f(\vx'+\alpha(\vx-\vx'))}{\partial \vx}d\alpha$.
\end{itemize}
The SmoothSM and SmoothIG are the smoothed versions of SM and IG, respectively.%

\subsection{Evaluation metrics}
Given original attribution $g(\vx)$ and perturbed attribution $g(\tilde{\vx})$, we use top-k intersection, Kendall's rank correlation \citep{ghorbani2019interpretation} and cosine similarity \citep{wang2022exploiting} to evaluate their differences.
\begin{itemize}
    \item Top-k intersection measures the proportion of $k$ largest features that overlap between $g(\vx)$ and $g(\tilde{\vx})$.
    \item Kendall's rank correlation measures the proportion of pairs of features that have the same order in $g(\vx)$ and $g(\tilde{\vx})$: $\frac{2}{d(d-1)}\sum_{i=1}^d\sum_{j=i+1}^d\mathbf{1}_{\{g(\vx)_i > g(\vx)_j\}}\mathbf{1}_{\{g(\tilde{\vx})_i > g(\tilde{\vx})_j\}}$.
    \item Cosine similarity measures the cosine of the angle between $g(\vx)$ and $g(\tilde{\vx})$: $\frac{g(\vx)^\top g(\tilde{\vx})}{\Vert g(\vx)\Vert\Vert g(\tilde{\vx})\Vert}$.
\end{itemize}

\subsection{Baseline methods}
We compare with the following adversarial and attributional robust models:
\paragraph{IG-NORM \citep{chen2019robust}}  
\begin{equation}
    \textmd{CE}(f(\vx), y) +  \lambda \max_{\tilde{\bm{x}}\in\mathcal{B}_{\varepsilon}(\bm{x})}\Vert\textmd{IG}(\bm{x}, \tilde{\bm{x}})\Vert_1
\end{equation}
\paragraph{TRADES \citep{zhang2019theoretically}}
\begin{equation}
    \textmd{CE}(f(\tilde{\bm{x}}),y) + \beta\textmd{KL}(f(\bm{x})\|f(\tilde{\bm{x}}))
\end{equation}

\paragraph{IGR \citep{wang2022exploiting}}
\begin{equation}
    \textmd{CE}(f(\tilde{\bm{x}}),y) + \beta\textmd{KL}(f(\bm{x})\|f(\tilde{\bm{x}}))+\lambda\left(1-\cos(\textmd{IG}(\bm{x}), \textmd{IG}(\tilde{\bm{x}}))\right)
\end{equation}
Here $\textmd{CE}$ denotes the cross-entropy loss and $\textmd{KL}$ denotes the Kullback-Leibler divergence.

\section{Additional experiments}\label{apx:exp}
\subsection{Test on Monte Carlo estimation}\label{apx:prob_bound}
Note that the bound given by Theorem~\ref{thm:general} is deterministic. In this section, we provide a probabilistic bound for the attribution robustness. Specifically, we want to find the value of $t$ such that $Pr(T\leq t) = 1-\alpha$, where $T$ is defined in Eqn.~(\ref{eqn:T}) and $\alpha$ is the significance level. Recall that $T$ is defined as follows:
\begin{equation}
    T = \frac{\Vert h(\vx)\Vert_2}{\sqrt{\Vert h(\vx)\Vert_2^2 + c}}
\end{equation}
where $c = M^2V_U^2/V_S^2$. If we denote that $Q = \Vert h(\vx)\Vert_2$, then we have
\begin{equation}
    Pr(T\leq t) = Pr\left(\frac{Q}{\sqrt{Q^2 + c}} \leq t\right) = Pr\left(Q^2\leq\frac{ct^2}{1-t^2}\right) %
\end{equation}

Note that we used Monte Carlo Integration to calculate the integral in $h(\vx)$, which estimates $h(\vx)$ by sampling $\veta$ from $\mathcal{B}$, \emph{i.e.},
\begin{equation}
    \hat{h}(\vx) = \frac{1}{N} \sum_{i=1}^N g(\vx +\veta_i), \quad \veta_i \sim \mathcal{B}.
\end{equation}
Note that $\hat{h}(\vx)$ is an unbiased estimator of $h(\vx)$, \emph{i.e.} $\mathbb{E}[\hat{h}(\vx)] = h(x)$. The estimator almost surely converges to $h(\vx)$ as $N \rightarrow \infty$, \emph{i.e.} $\lim_{N \rightarrow \infty} \hat{h}(\vx) = h(\vx)$ almost surely. By the Central Limit Theorem, the estimator $\hat{h}(\vx)$ has the following asymptotic distribution,
\begin{equation}
    \hat{h}(\vx) \stackrel{a.s.}{\sim} \mathcal{N}(h(\vx), D),
\end{equation}
which the covariance matrix $D = \text{diag}(\sigma_{ii}^2/N)$ can be estimated by the empirical variances of $g(\vx+\veta_i)$. Thus, the quadratic form $Q^2 = \Vert h(\vx)\Vert_2^2$ can be seen as generalized chi-square distributed. We can derive the cumulative distribution function of Monte Carlo estimator $T_{MC}$ at $t$ as the cumulative distribution function of the generalized chi-square distribution at $\frac{ct^2}{1-t^2}$, \emph{i.e.},
\begin{equation}
    Pr(T_{MC}\leq t) = F\left(\frac{ct^2}{1-t^2}\right),
\end{equation}
where $F$ is the cumulative distribution function of the generalized chi-square distribution constructed from the quadratic form of Gaussian random variable with mean $h(\vx)$ and covariance $D$ \citep{davies1980distribution,das2021method}. In this work, we use the R package \textrm{CompQuadForm} \citep{duchesne2010computing} to compute the cumulative distribution function. For any fixed image sample $\vx$, we can validate $t_2-t_1$ is close to $0$ when $Pr(t_1\leq T_{MC}\leq t_2) = 1-\alpha$ by solving the following equation. For small $\alpha = 0.01$ and the number of samples $N=100,000$, we found that the values of $t_2-t_1$ are at scale of $10^{-4}$ in MNIST and CIFAR-10, and $10^{-3}$ in ImageNet calculated by choosing $10,000$ samples from each dataset. This validates the error from Monte Carlo integral is minute and that the probabilistic bound is close to the deterministic bound.
\begin{equation}
    F\left(\frac{ct_2^2}{1-t_2^2}\right)= 1-\alpha/2 \quad\text{and}\quad F\left(\frac{ct_1^2}{1-t_1^2}\right)= \alpha/2.
\end{equation}

\begin{figure}[t]
    \centering
    \begin{subfigure}{0.23\textwidth}
        \includegraphics[width=\linewidth]{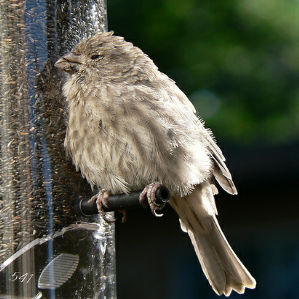}
    \end{subfigure}
    \begin{subfigure}{0.23\textwidth}
        \includegraphics[width=\linewidth]{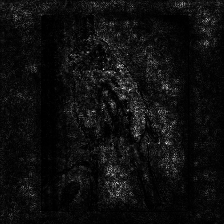}
    \end{subfigure}
    \begin{subfigure}{0.23\textwidth}
        \includegraphics[width=\linewidth]{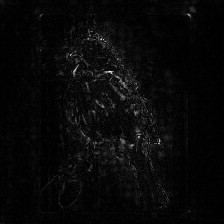}
    \end{subfigure}
    \begin{subfigure}{0.23\textwidth}
        \includegraphics[width=\linewidth]{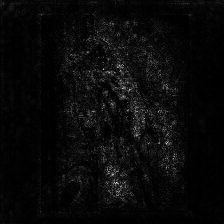}
    \end{subfigure}\\
    \begin{subfigure}{0.23\textwidth}
        \includegraphics[width=\linewidth]{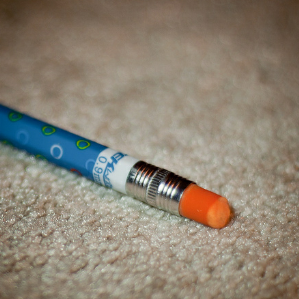}
    \end{subfigure}
    \begin{subfigure}{0.23\textwidth}
        \includegraphics[width=\linewidth]{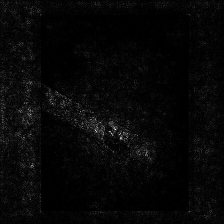}
    \end{subfigure}
    \begin{subfigure}{0.23\textwidth}
        \includegraphics[width=\linewidth]{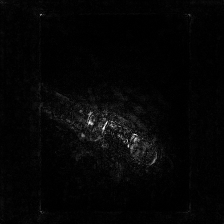}
    \end{subfigure}
    \begin{subfigure}{0.23\textwidth}
        \includegraphics[width=\linewidth]{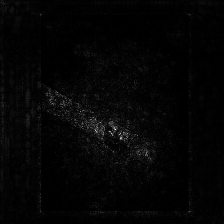}
    \end{subfigure}\\
    \begin{subfigure}{0.23\textwidth}
        \includegraphics[width=\linewidth]{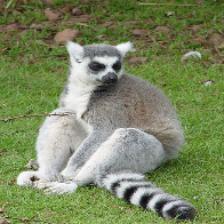}
    \end{subfigure}
    \begin{subfigure}{0.23\textwidth}
        \includegraphics[width=\linewidth]{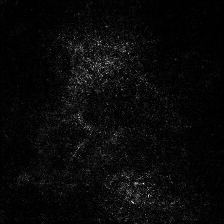}
    \end{subfigure}
    \begin{subfigure}{0.23\textwidth}
        \includegraphics[width=\linewidth]{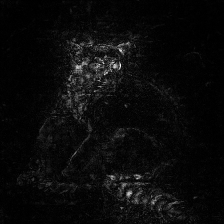}
    \end{subfigure}
    \begin{subfigure}{0.23\textwidth}
        \includegraphics[width=\linewidth]{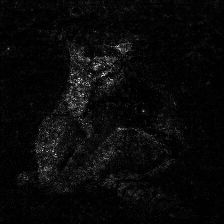}
    \end{subfigure}\\
    \begin{subfigure}{0.23\textwidth}
        \includegraphics[width=\linewidth]{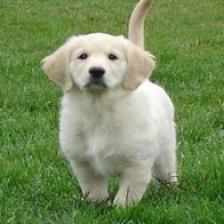}
    \end{subfigure}
    \begin{subfigure}{0.23\textwidth}
        \includegraphics[width=\linewidth]{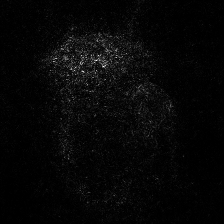}
    \end{subfigure}
    \begin{subfigure}{0.23\textwidth}
        \includegraphics[width=\linewidth]{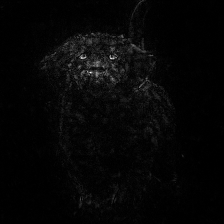}
    \end{subfigure}
    \begin{subfigure}{0.23\textwidth}
        \includegraphics[width=\linewidth]{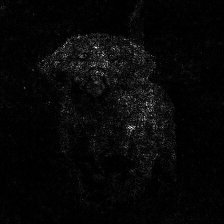}
    \end{subfigure}\\
    \begin{subfigure}{0.23\textwidth}
        \includegraphics[width=\linewidth]{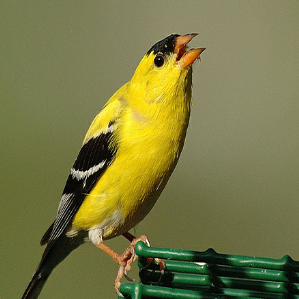}
        \caption{Original image}
    \end{subfigure}
    \begin{subfigure}{0.23\textwidth}
        \includegraphics[width=\linewidth]{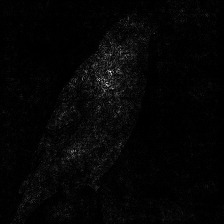}
        \caption{IG}
    \end{subfigure}
    \begin{subfigure}{0.23\textwidth}
        \includegraphics[width=\linewidth]{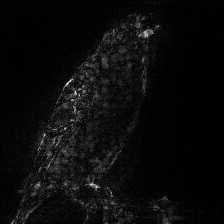}
        \caption{Gaussian smoothed}
    \end{subfigure}
    \begin{subfigure}{0.23\textwidth}
        \includegraphics[width=\linewidth]{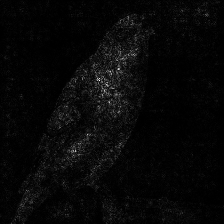}
        \caption{Uniformly smoothed}
    \end{subfigure}
    \caption{Additional visualization of the attribution maps of the (a) original image, (b) IG, (c) Gaussian smoothed IG, and (d) uniformly smoothed IG.}
    \label{fig:extra_visualization}
\end{figure}

\subsection{Additional visualization of the uniformly smoothed attributions}
In Figure~\ref{fig:illustration} (left), we have shown that the uniformly smoothed attributions have a comparable quality as the original attributions. Here more examples are provided in Figure~\ref{fig:extra_visualization} to illustrate the quality of the uniformly smoothed attributions.

\subsection{Evaluation of Center Smoothing \texorpdfstring{\citep{kumar2021center}}{[25]} on attributions}
\begin{table}[t]
    \centering
    \caption{Evaluation of center smoothing on attributions}
    \begin{tabular}{c|c|c|c|c|c}
        \toprule
         $\epsilon_1$& 0.1& 0.2& 0.3& 0.4& 0.5  \\
         \midrule
         SmoothSM& 1.207& 1.729& 1.843& 1.907& 1.998\\
         \bottomrule
    \end{tabular}
    \label{tab:center_smoothing}
\end{table}
To compare the performance with center smoothing \citep{kumar2021center}, we also implemented the same method to evaluate the certification of attributions. Specifically, we compute the bound for SmoothSM on IG-NORM using MNIST, and follow the same setting by choosing $h=1$ and $\epsilon_1=0.1,0.2,\cdots,0.5$. Directly using the cosine similarity on the method is not applicable since cosine similarity does not satisfy the triangle inequality. Following the relaxation method in Sec.4 of \citet{kumar2021center}, a multiplier $\gamma=2$ is added. Besides, we use $1-\cos\theta$ to reflect the distance metric instead of the similarity metric. The results are shown in the Table \ref{tab:center_smoothing}. It can be observed that the upper bound for $1-\cos\theta$ is greater than $1$ for all the choices of $\epsilon$, which is trivially valid for the trigonometric function since we only consider $\cos\theta\in[0, 1]$. Thus, the upper bound provided in the aforementioned work can be too loose on our setting.

\subsection{Evaluation of alternative formulations}

In Section~\ref{sec:alternative}, we introduced two alternative formulations of the proposed method that can be applied in specific scenarios. In this section, we provide additional information to report the experiments on these two formulations. 

\begin{table}
    \centering
    \caption{Maximum allowable perturbation size for different threshold ($T=0.8$ and $T=0.9$) under various choices of $\ell_2$ smoothing radii $r$ evaluated on MNIST.}
    \label{tbl:alternative_eps_mnist}
    \begin{tabular}{l|lccccccc}
        \toprule
        $T=0.9$ & $\ell_2$ radius ($r$)  & 0.5 & 1.0 & 1.5 & 2.0 & 2.5 & 3.0 & 3.5\\
        \midrule
        & Standard &0.0389 &0.0951 &0.1550 &0.2164 &0.2783 &0.3404 &0.4029 \\
        & IG-NORM  &0.0394 &0.0957 &0.1557 &0.2170 &0.2790 &0.3420 &0.4067 \\
        & IGR  &0.0390 &0.0952 &0.1552 &0.2174 &0.2818 &0.3477 &0.4163 \\
        \midrule
        \midrule
        $T=0.8$ & $\ell_2$ radius ($r$) & 0.5 & 1.0 & 1.5 & 2.0 & 2.5 & 3.0 & 3.5\\
        \midrule
        
        & Standard &0.0447 &0.1051 &0.1691 &0.2345 &0.3004 &0.3664 &0.4329 \\
        & IG-NORM &0.0448 &0.1052 &0.1692 &0.2354 &0.3037 &0.3733 &0.4456 \\
        & IGR &0.0452 &0.1057 &0.1697 &0.2350 &0.3010 &0.3680 &0.4365\\
        \bottomrule
    \end{tabular}
\end{table}

\begin{table}
    \centering
    \caption{Maximum allowable perturbation size for different threshold ($T=0.8$ and $T=0.9$) under various choices of $\ell_2$ smoothing radii $r$ evaluated on CIFAR-10.}
    \label{tbl:alternative_eps_cifar10}
    \begin{tabular}{l|lccccccc}
        \toprule
        $T=0.9$ & $\ell_2$ radius ($r$)  & 0.5 & 1.0 & 1.5 & 2.0 & 2.5 & 3.0 & 3.5\\
        \midrule
        & Standard &0.0086 &0.0469 &0.0885 &0.1322 &0.1773 &0.2222 &0.2683 \\
        & IG-NORM &0.0323 &0.0705 &0.1104 &0.1510 &0.1923 &0.2337 &0.2749 \\
        & IGR &0.0167 &0.0545 &0.1032 &0.1586 &0.2150 &0.2588 &0.2805\\
        \midrule
        \midrule
        $T=0.8$ & $\ell_2$ radius ($r$) & 0.5 & 1.0 & 1.5 & 2.0 & 2.5 & 3.0 & 3.5\\
        \midrule
        & Standard &0.0128 &0.0522 &0.0951 &0.1402 &0.1866 &0.2330 &0.2868\\
        & IG-NORM&0.0343 &0.0742 &0.1157 &0.1580 &0.2009 &0.2439 &0.2867 \\
        & IGR &0.0237 &0.0693 &0.1258 &0.1861 &0.2546 &0.3090 &0.3559\\
        \bottomrule
    \end{tabular}
\end{table}

\begin{table}
    \centering
    \caption{Maximum allowable perturbation size for different threshold ($T=0.8$ and $T=0.9$) under various choices of $\ell_2$ smoothing radii $r$ evaluated on ImageNet.}\label{tbl:alternative_eps_imagenet}
    \begin{tabular}{lccccccc}
        \toprule
        $\ell_2$ radius ($r$) & 0.5 & 1.0 & 1.5 & 2.0 & 2.5 & 3.0 & 3.5\\
        \midrule
        $T=0.9$ &0.0046 &0.0100 &0.0152 &0.0295 &0.0494 &0.0628 &0.0768\\
        $T=0.8$ &0.0058 &0.0127 &0.0196 &0.0369 &0.0618 &0.0820 &0.1040\\
        \bottomrule
    \end{tabular}
\end{table}

\begin{table}
    \centering
    \caption{Empirical cosine similarity between original and perturbed smoothed attributions under various choices of $\ell_2$ smoothing radius $r$, and the perturbation size computed in Table~\ref{tbl:alternative_eps_mnist} ($T=0.8$).}\label{tbl:cosine_val_eps}
    \begin{tabular}{lccccccc}
        \toprule
        $r$& 0.5 & 1.0 & 1.5 & 2.0 & 2.5 & 3.0 & 3.5\\
        \midrule
        Standard &0.8636 &0.8522 &0.8347 &0.8127 &0.8477 &0.8603 &0.8310\\
        IG-NORM &0.8308 &0.8181 &0.8504 &0.8728&0.8502&0.8193&0.8199\\
        IGR &0.8231&0.8800&0.8720&0.8603 &0.8362 &0.8135 &0.8567\\
        \bottomrule
    \end{tabular}

\end{table}

\begin{table}
    \centering
    \caption{Minimum smoothing radius requires to achieve the threshold ($T=0.8$ and $T=0.9$) under various choices of $\ell_2$ perturbation size $\varepsilon$. IG-NORM and IGR are omitted since they are not scalable to ImageNet.}\label{tbl:alternative_r}
    \begin{tabular}{llcc|cc|cc}
      \toprule
      && \multicolumn{2}{c|}{MNIST} & \multicolumn{2}{c|}{CIFAR-10} &\multicolumn{2}{c}{ImageNet} \\
      \cmidrule{3-4}\cmidrule{5-6}\cmidrule{7-8}
      &perturbation size ($\epsilon$)&0.5 &1.0 &0.5 &1.0&0.5 &1.0 \\
      \midrule\midrule
      $T=0.9$ &Standard & 5.1902 & 5.8752 &5.9752&7.9504 &74.6272&149.2544 \\
      &IG-NORM  & 5.1189 & 5.7699 &5.6860&7.3720 &/&/ \\
      &IGR  & 5.0265 & 5.6623  &5.2895&6.5790&/&/\\
      \midrule\midrule
      $T=0.8$ &Standard & 3.8927 & 4.4064 &5.7082&7.4164 &48.2095 &96.4190 \\
      &IG-NORM  & 3.8392 & 4.3274 &5.4875&6.9750&/&/ \\
      &IGR  & 3.7699 & 4.2468 &5.0287&6.0573&/&/\\
      \bottomrule
    \end{tabular}
  \end{table}
  
In \Cref{tbl:alternative_eps_mnist,tbl:alternative_eps_cifar10,tbl:alternative_eps_imagenet}, which correspond to MNIST, CIFAR-10 and ImageNet, respectively, we report the computed values of the maximum allowable perturbation size. Under the size constraint, no examples can be found by the attacks against uniformly smoothed IG of a certain radius such that the cosine similarity between clean and perturbed attributions exceeds the given threshold ($T=0.8$ and $T=0.9$). The results are consistent with our theory. For larger radius smoothing, the maximum allowable perturbation size is also larger. When the threshold requirement is stricter, the maximum allowable perturbation size is smaller, which suggests weaker attacks are allowed. The method is also scalable to ImageNet, which takes around 15 seconds to compute for each sample. Moreover, we also applied attribution attacks using the same radius and maximum perturbation size $\epsilon$, computed using Eqn.~(\ref{eqn:epsilon}). Similar to the experiments in Section~\ref{sec:experiments}, we performed 20 attacks on each sample. We found that out of the total 200,000 attacked samples, the cosine similarities between clean and perturbed attributions were higher than the given threshold, suggesting that the computed bound is valid (see \Cref{tbl:cosine_val_eps}).

We also evaluate the third formulation that the minimum radius of smoothing required such that, within the given perturbation sizes, the cosine similarity between original and perturbed smoothed attributions is larger than the given threshold. In Table~\ref{tbl:alternative_r}, the computed minimum radius of smoothing is reported. Similarly, we observe that the minimum radius of smoothing is larger when the threshold requirement is stricter, and when the attack is stronger. This is also consistent with our theory. We also notice that the radius for ImageNet is extremely large, which indicates that ImageNet is difficult to defend under such strict threshold requirements.
\end{document}